\newcommand{\bx}{{\bm x}}
\newcommand{\bt}{{\bm t}}
\newcommand{\bp}{{\bm p}}
\newcommand{\boldb}{{\bm b}}
\newcommand{\bW}{{\bf W}}
\newcommand{\bX}{{\bf X}}
\newcommand{\bphi}{{\bm \phi}}
\newcommand{\bbR}{\mathbb R}
\newcommand{\bbS}{\mathbb S}
\newcommand{\bbE}{\mathbb E}
\newcommand{\cX}{\mathcal X}
\newcommand{\cL}{\mathcal L}
\newcommand{\cD}{\mathcal D}
\newcommand{\cO}{\mathcal O}
\newtheorem{proposition}{Proposition}
\pgfplotsset{compat=1.17}
\newcommand{\leg}[1]{\addlegendentry{#1}}
\crefname{section}{Sec.}{Secs.}
\Crefname{section}{Section}{Sections}
\Crefname{table}{Table}{Tables}
\crefname{table}{Tab.}{Tabs.}
\newcommand{\ours}{DiHT\xspace}
\newcommand{\objects}{objects\xspace}
\newcommand{\obj}{obj\xspace}
\newcolumntype{L}[1]{>{\raggedright\let\newline\\\arraybackslash\hspace{0pt}}m{#1}}
\newcolumntype{C}[1]{>{\centering\let\newline\\\arraybackslash\hspace{0pt}}m{#1}}
\newcolumntype{R}[1]{>{\raggedleft\let\newline\\\arraybackslash\hspace{0pt}}m{#1}}
\definecolor{amber_custom}{rgb}{1.0, 0.49, 0.0}
\definecolor{brass}{rgb}{0.71, 0.65, 0.26}
\definecolor{darkred}{rgb}{0.75, 0.0, 0.0}
\definecolor{darkblue}{rgb}{0.0, 0.0, 0.75}
\definecolor{darkgreen}{rgb}{0.0, 0.50, 0.0}
\begin{document}

\title{Filtering, Distillation, and Hard Negatives for Vision-Language Pre-Training}

\author{Filip Radenovic$^1$, Abhimanyu Dubey$^1$$^*$, Abhishek Kadian$^1$$^*$, Todor Mihaylov$^1$$^*$, Simon Vandenhende$^1$$^*$\\Yash Patel$^2$$^\dagger$, Yi Wen$^1$, Vignesh Ramanathan$^1$ and Dhruv Mahajan$^1$$^\ddagger$ \\
$^1$Meta AI \quad $^2$CTU in Prague
}
\maketitle

\def\thefootnote{*}\footnotetext{Equal contribution. $^\dagger$Work done at Meta AI. $^\ddagger$Research Lead.}
\def\thefootnote{\arabic{footnote}}

\begin{abstract}
Vision-language models trained with contrastive learning on large-scale noisy data are becoming increasingly popular for zero-shot recognition problems. In this paper we improve the following three aspects of the contrastive pre-training pipeline: dataset noise, model initialization and the training objective.  First, we propose a straightforward filtering strategy titled Complexity, Action, and Text-spotting (CAT) that significantly reduces dataset size, while achieving improved performance across zero-shot vision-language tasks. Next, we propose an approach titled Concept Distillation to leverage strong unimodal representations for contrastive training that does not increase training complexity while outperforming prior work. Finally, we modify the traditional contrastive alignment objective, and propose an importance-sampling approach to up-sample the importance of hard-negatives without adding additional complexity. On an extensive zero-shot benchmark of 29 tasks, our Distilled and Hard-negative Training (\ours) approach improves on 20 tasks compared to the baseline. Furthermore, for few-shot linear probing, we propose a novel approach that bridges the gap between zero-shot and few-shot performance, substantially improving over prior work.
Models are available at \href{https://github.com/facebookresearch/diht}{github.com/facebookresearch/diht}.
\end{abstract}
\vspace{-2.5em}
\section{Introduction}
\label{sec:intro}
\vspace{-0.5em}

An increasingly popular paradigm in multimodal learning is contrastive pre-training~\cite{cly+20,lyl+20,rkh+21,jyx+21,lsg+21,wyy+21,yhh+21,lcc+21}, which involves training multimodal models on very large-scale noisy datasets of image-text pairs sourced from the web. It has been shown to be incredibly effective for a variety of vision-language tasks without any task-specific fine-tuning (\ie, zero-shot), such as image classification~\cite{rds+15}, text and image retrieval~\cite{lmb+14,pwl+15}, visual question answering~\cite{gks+17}, among several others. In this paper, we study the problem of contrastive pre-training for dual-encoder architectures~\cite{rkh+21} with the objective of improving image-text alignment for {\em zero-shot} tasks. We revisit three important aspects of the contrastive pre-training pipeline -- noise in datasets, model initialization, and contrastive training, and present strategies that significantly improve model performance on a variety of zero-shot benchmarks, see Figure~\ref{fig:teaser}.

\begin{figure}[t]
\centering
\pgfplotstableread{
    samples     l_in    l_coco_t2i  l_coco_i2t  l_flckr_t2i     l_flckr_i2t     l_filt_in   l_filt_coco_t2i     l_filt_coco_i2t     l_filt_flckr_t2i    l_filt_flckr_i2t
    4           0.608   0.337       0.521       0.593           0.777           0.615       0.376               0.559               0.665               0.832
    8           0.634   0.357       0.534       0.608           0.794           0.633       0.388               0.564               0.680               0.832
    16          0.645   0.368       0.551       0.628           0.812           0.644       0.393               0.560               0.681               0.854
    32          0.651   0.373       0.553       0.632           0.804           0.648       0.400               0.574               0.684               0.843
}{\filtering}

\pgfplotstableread{
    id  clip_in     clip_coco_t2i   clip_flickr_t2i     ours_in     ours_coco_t2i   ours_flickr_t2i
    1   0.634       0.314           0.595               0.680       0.406           0.686
    2   0.684       0.337           0.633               0.722       0.433           0.729
    3   0.766       0.377           0.686               0.779       0.493           0.782
}{\clipvsours}

\pgfplotstableread{
    k       ours_b16_pgd    ours_b16_sgd    ours_l14_336_pgd    ours_l14_336_sgd    clip_l14_336_sgd    swag_h14_sgd
    0       0.7220          0.7220          0.7790              0.7790              0.7660              nan
    1       0.7235          0.4463          0.7812              0.5001              0.3896              0.5701
    5       0.7367          0.6678          0.8001              0.7393              0.6642              0.7521
    10      0.7432          0.7079          0.8046              0.7753              0.7280              0.7820
    25      0.7578          0.7444          0.8125              0.8076              0.7768              0.8078
    50      0.7777          0.7658          nan                 nan                 nan                 nan
    100     0.7897          0.7821          nan                 nan                 nan                 nan
}{\fewshot}
\begin{tabular}{ccc}
\hspace{-1.5em}
\begin{tikzpicture}
    \begin{axis}[%
        ylabel near ticks, ylabel shift = -5pt, yticklabel pos=left,
        xlabel near ticks, xlabel shift = -5pt,
        font=\scriptsize,
        width=0.425\linewidth,
        height=0.5\linewidth,
        title={ImageNet1K},
        title style = {yshift = -5pt},
        xlabel={Model Complexity},
        ylabel={Accuracy@1},
        legend pos=south east,
        legend style={cells={anchor=west}, font =\tiny, fill opacity=0.8, row sep=-2.5pt, xshift=0.15em, yshift=-0.2em},
        ymin = 60,
        ymax = 80,
        xmin = 0.8,
        xmax = 3.2,
        grid=both,
        xtick={1,2,3},
        xticklabels={B/32, B/16, L/14\\@336},
        ytick={5,10,...,100},
        tick label style ={font=\scriptsize},
        xticklabel style={font=\scriptsize,align=left},
    ]  
        \addplot[color=darkred, solid, mark=star, mark size=2, line width=1] table[x=id, y expr={100*\thisrow{ours_in}}] \clipvsours; \leg{\ours};
        \addplot[color=darkblue, solid, mark=x, mark size=2, line width=1] table[x=id, y expr={100*\thisrow{clip_in}}] \clipvsours; \leg{CLIP};
    \end{axis}
\end{tikzpicture}
\hspace{-2em}
&
\begin{tikzpicture}
    \begin{axis}[%
        ylabel near ticks, ylabel shift = -5pt, yticklabel pos=left,
        xlabel near ticks, xlabel shift = -5pt,
        font=\scriptsize,
        width=0.425\linewidth,
        height=0.5\linewidth,
        title={COCO (T2I)},
        title style = {yshift = -5pt},
        xlabel={Model Complexity},
        ylabel={Recall@1},
        legend pos=north west,
        legend style={cells={anchor=west}, font =\tiny, fill opacity=0.8, row sep=-2.5pt, xshift=-0.15em, yshift=0.15em},
        ymin = 30,
        ymax = 50,
        xmin = 0.8,
        xmax = 3.2,
        grid=both,
        xtick={1,2,3},
        xticklabels={B/32, B/16, L/14\\@336},
        ytick={5, 10, ..., 100},
        tick label style ={font=\scriptsize},
        tick label style ={font=\scriptsize},
        xticklabel style={font=\scriptsize,align=left},
    ]  
        \addplot[color=darkblue, solid, mark=x, mark size=2, line width=1] table[x=id, y expr={100*\thisrow{clip_coco_t2i}}] \clipvsours; 
        \addplot[color=darkred, solid, mark=star, mark size=2, line width=1] table[x=id, y expr={100*\thisrow{ours_coco_t2i}}] \clipvsours; 
    \end{axis}
\end{tikzpicture}
\hspace{-2em}
&
\begin{tikzpicture}
    \begin{axis}[%
        ylabel near ticks, ylabel shift = -5pt, yticklabel pos=left,
        xlabel near ticks, xlabel shift = -5pt,
        font=\scriptsize,
        width=0.425\linewidth,
        height=0.5\linewidth,
        title={Flickr (T2I)},
        title style = {yshift = -5pt},
        xlabel={Model Complexity},
        ylabel={Recall@1},
        legend pos=north west,
        legend style={cells={anchor=west}, font =\tiny, fill opacity=0.8, row sep=-2.5pt, xshift=-0.15em, yshift=0.15em},
        ymin = 59,
        ymax = 79,
        xmin = 0.8,
        xmax = 3.2,
        grid=both,
        xtick={1,2,3},
        xticklabels={B/32, B/16, L/14\\@336},
        ytick={59, 64, ..., 100},
        tick label style ={font=\scriptsize},
        tick label style ={font=\scriptsize},
        xticklabel style={font=\scriptsize,align=left},
    ]  
        \addplot[color=darkblue, solid, mark=x, mark size=2, line width=1] table[x=id, y expr={100*\thisrow{clip_flickr_t2i}}] \clipvsours; 
        \addplot[color=darkred, solid, mark=star, mark size=2, line width=1] table[x=id, y expr={100*\thisrow{ours_flickr_t2i}}] \clipvsours; 
    \end{axis}
\end{tikzpicture}
\end{tabular}
\vspace{-1.5em}
\caption{
    \ours trained on 438M LAION-CAT samples \vs CLIP~\cite{rkh+21} trained on 400M OpenAI samples.
    \label{fig:teaser}
}
\vspace{-1.5em}
\end{figure}
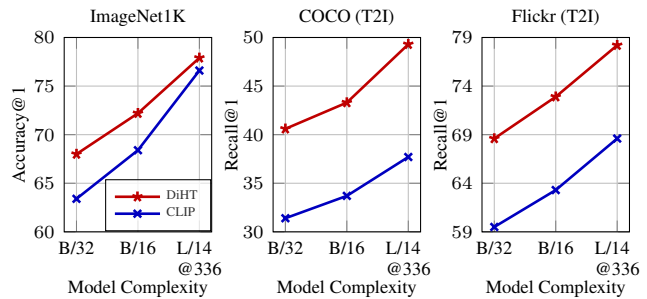

Most image-text datasets are noisy and poorly-aligned. Few recent efforts~\cite{iww+21} have tried to clean the noise by filtering samples based on alignment scores from an existing model like CLIP~\cite{rkh+21}. However, this approach is limited by the biases and flaws of the model itself. On the other hand, momentum-based approaches~\cite{lsg+21} to reduce noise are infeasible for large-scale training due to their increased compute and memory requirements. To this end, we provide a scalable and effective approach titled \textbf{C}omplexity, \textbf{A}ction and \textbf{T}ext-spotting ({\bf CAT}) filtering. CAT is a filtering strategy to select only informative text-image pairs from noisy web-scale datasets. We show that training on a CAT-filtered version of large-scale noisy datasets such as LAION~\cite{sbv+22} can provide up to {\bf 12\%} relative improvements across vision-language tasks despite removing almost {\bf 80\%} of the training data, see Section~\ref{sec:exp_ablation} and Table~\ref{tab:fitering} for more details.

A common strategy~\cite{pdgk21,zwm+22} to further improve multimodal training is to warm-start it with image and text models pre-trained at large scale on their respective modalities. However, due to the increased noise in image-text data, fine-tuning the entire model undermines the benefits of the warm-start. One can alternatively use model freezing strategies like locked-image tuning~\cite{zwm+22}, but they are unable to adapt to the complex queries present in multimodal problems (\eg, cross-modal retrieval) and the models perform poorly on retrieval benchmarks (see Section~\ref{sec:exp_ablation}). We propose an entirely different approach, {\em concept distillation} (CD), to leverage strong pre-trained vision models. The key idea behind {\em concept distillation} is to train a linear classifier on the image encoder to predict the distilled concepts from a pre-trained teacher model, inspired by results in weakly-supervised large-scale classification~\cite{mgr+18,sga+22}.

Finally, we revisit the training objective: almost all prior work has utilized {\em noise-contrastive estimation} via the InfoNCE loss~\cite{olv18}, shortcomings have been identified
in the standard InfoNCE formulation~\cite{crl+20,ksp+20}. We demonstrate that by using a {\em model-based} importance sampling technique to emphasize harder negatives, one can obtain substantial improvements in performance. 

A summary of our pipeline is available in Figure~\ref{fig:pipeline}.
Our combined approach obtains significant improvements over the baseline for dual-encoder architectures on an elaborate benchmark of 29 tasks. 
Specifically, with the ViT-B/16~\cite{dbk+20} architecture, we improve zero-shot performance on {\bf 20 out of 29 tasks}, over CLIP training on the LAION-2B dataset~\cite{iww+21,sbv+22}, despite training on a subset that is {\bf 80\%} smaller, see Figure~\ref{fig:barlaion}.
Furthermore, we demonstrate that even when trained with the smaller (but relatively less noisy) pretraining dataset {\bf PMD}, our performance is better on {\bf 28 out of 29 tasks} than CLIP trained on the same data, often with a large margin, see Figure~\ref{fig:barpmd}.

Additionally, we present a simple yet effective approach to maintain the performance continuum as one moves from zero-shot to few-shot learning in the low data regime. Prior work~\cite{rkh+21} has shown a substantial drop in performance as one moves from zero-shot to $k$-shot learning, which is undesirable for practical scenarios. We propose an alternate linear probing approach that initializes the linear classifier with zero-shot text prompts and ensures that final weights do not drift away too much via projected gradient descent~\cite{b04}. On ImageNet1K, we show huge improvements over prior work for small $k$ values. For example, our approach improves 5-shot top-1 accuracy by an absolute margin of {\bf 7\%} (see Figure~\ref{fig:few}) compared to the baseline strategy of linear probing with a random initialization.

\begin{figure}[t]
    \centering
    \includegraphics[width=1.0\linewidth]{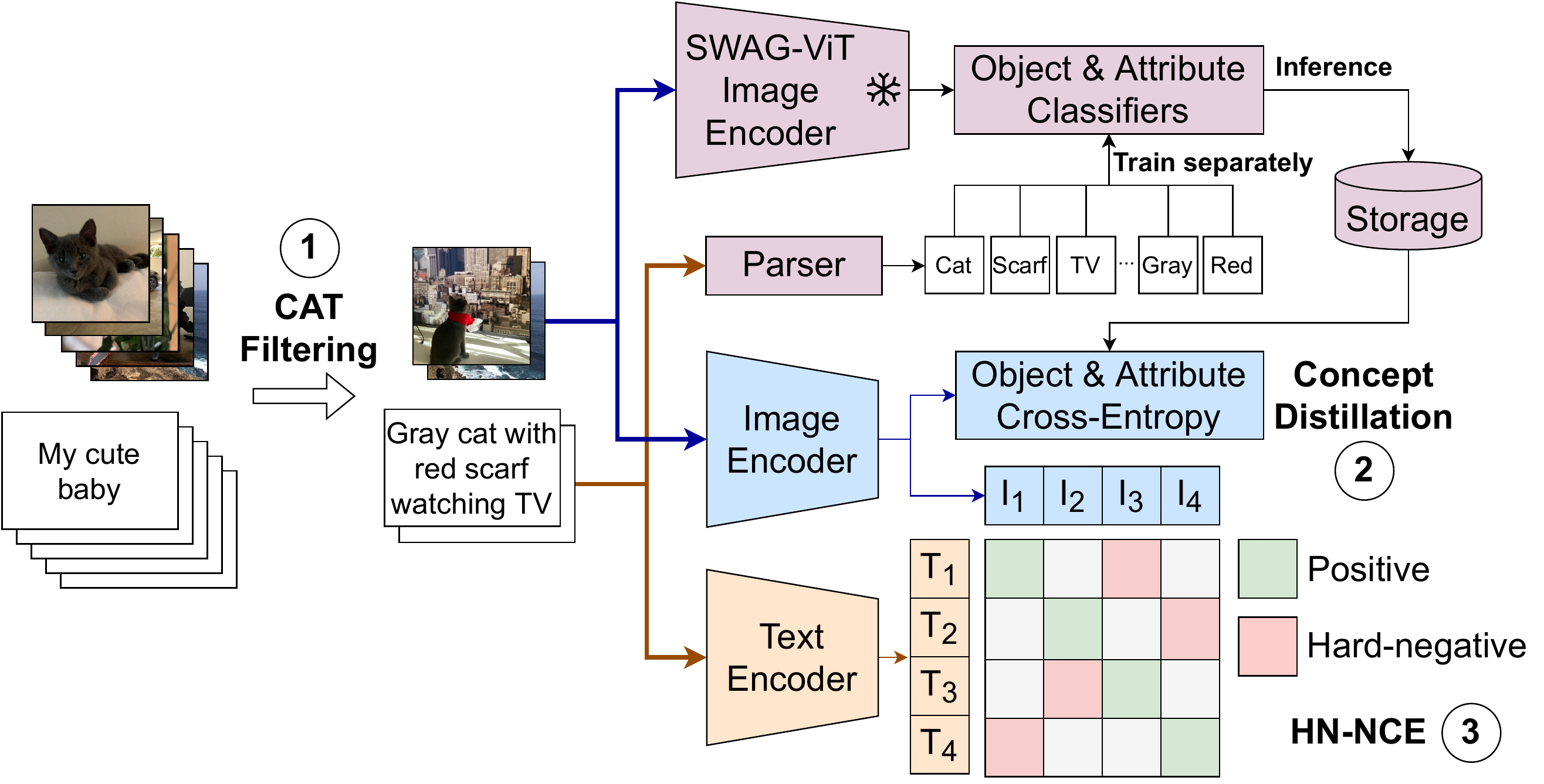}
    \vspace{-2em}
    \caption{Summary of our pipeline. We propose improvements to the standard vision-language pre-training: (1) Complexity, Action and Text-spotting (CAT) filtering that removes non-informative text-image pairs; (2) Concept distillation from a frozen ({\tiny \SnowflakeChevron}) pre-trained image encoder; (3) Hard-negative contrastive loss.}
    \label{fig:pipeline}
\vspace{-1em}
\end{figure}

\section{Related work}
\label{sec:related}

\paragraph{Dataset curation for contrastive pretraining.}
Large-scale contrastive pretraining~\cite{cly+20,lyl+20,rkh+21,jyx+21,lsg+21,wyy+21,yhh+21,lcc+21} typically requires dataset sizes of the order of hundreds of millions to billions. 
Seminal works in this area, \eg, CLIP~\cite{rkh+21} and ALIGN~\cite{jyx+21}, have largely relied on image-text pairs crawled from the web. Subsequently, versions of large-scale image-text datasets have been created but not released publicly, including WIT-400M~\cite{rkh+21}, ALIGN-1.8B~\cite{jyx+21}, FILIP-340M~\cite{yhh+21}, FLD-900M~\cite{lcc+21}, BASIC-6.6B~\cite{pdgk21}, PaLI-10B~\cite{cwc+22}. These  datasets often use unclear or primitive cleaning strategies, \eg, removing samples with short or non-English captions. 
Recently, LAION-400M \cite{laion400m-sch+21} used CLIP-based scores to filter down a large dataset. The authors later released an English-only LAION-2B and a LAION-5B unfiltered dataset sourced from Common Crawl\footnote{\href{https://commoncrawl.org}{commoncrawl.org}}.
Apart from LAION-400M and BLIP~\cite{llxh22} which uses the bootstrapped image-grounded text encoder to filter out noisy captions, there has not been a significant investment in systematic curation strategies to improve zero-shot alignment performance on large-scale pretraining. 
In contrast to the previous work, we use quality-motivated filters that retain images whose captions are sufficiently complex, contain semantic concepts (actions), and do not contain text that can be spotted in the image \cite{ksl+21}.

\vspace{-1em}
\paragraph{Distillation from pre-trained visual models.}
Knowledge distillation \cite{hvd15} aims to transfer knowledge from one model to another and has been used in many contexts ranging from improving performance and efficiency \cite{bcn+06,ccy+17,lll+20,rbk+14,xlh+20,tcd+21,sx22} to improving generalization capabilities \cite{dws+19,lys+17,lyl+20}.
Several approaches use self-distillation to improve performance with lower computational overhead \cite{hc+19,xl+19,ypl+20}. For vision and language pre-training, \cite{ach+22, lsg+21, kvy+22} use soft-labels computed using embeddings from a moving average momentum model with the goal to reduce the adverse effects of noisy image-text pairs in the training data. Our concept distillation approach is a cheaper and more effective alternative, since it does not require us to run the expensive teacher model throughout the training\footnote{Distillation targets can be pre-computed and stored.} while retaining the most useful information from the visual concepts.

Another approach to take advantage of pre-trained visual models is to use them to initialize the image encoder, and continue pre-training either by locking the image encoder~\cite{pdgk21,zwm+22} or fine-tuning~\cite{pdgk21}.
However, these approaches lack the ability to align complex text to a fully-trained image encoder, and thus perform poorly on multimodal tasks, \eg cross-modal retrieval (see Section~\ref{sec:exp_sota}).\vspace*{0.05in}

\paragraph{Contrastive training with hard negatives.}
{\em Noise-contrastive estimation} (NCE)~\cite{gh10} is the typical objective for vision-text learning, with applications across large-scale multimodal alignment~\cite{rkh+21, jyx+21, cly+20,lyl+20} and unsupervised visual representation learning~\cite{mm20, hfw+20}. Several lines of work have studied the shortcomings of the original InfoNCE objective~\cite{olv18}, specifically, the selection and importance of negative samples. Chuang~\etal\cite{crl+20} present a debiasing approach to account for false negatives at very large batch sizes, typical in large-scale pretraining. Kalantidis~\etal\cite{ksp+20} present a MixUp approach to improve the quality of hard negative samples for unsupervised alignment. Using model-specific hard negatives in the training objective is proven to reduce the estimation bias of the model as well~\cite{zs21}. Contrary to prior semi-supervised work, we extend the model-based hard negative objective, first proposed in Robinson~\etal\cite{rcsj21} to multimodal alignment.

\section{Method}
\label{sec:method}

\paragraph{Background.}
We consider the task of contrastive image-text pretraining. Given a dataset $\cD = \{(I_i, T_i)\}_{i=1}^N$ of image-text pairs, we want to learn a dual encoder model $\bphi = \{\bphi_{\text{image}}, \bphi_{\text{text}}\}$, where $\bphi_{\text{image}}$ represents the image encoder, and $\bphi_{\text{text}}$ denotes the text encoder.
We use the shorthand $\bx = \bphi_{\text{image}}(I)$ and $\bt = \bphi_{\text{text}}(T)$ to denote the encoded images and texts, respectively, for an image-text pair $(I, T)$. We will now describe the three crucial components of our approach followed by the final training objective. 

\subsection{Complexity, Action, and Text (CAT) filtering}
\label{sec:method_filtering}
Our complexity, action, and text spotting (CAT) filtering is a combination of two filters: a caption complexity filter that removes image-caption pairs if a caption is not sufficiently complex, and an image filter that removes pairs if the image contains text matching the caption to prevent polysemy during alignment. We use the LAION-2B \textit{pre-cleaned} obtained after using filters\footnote{Not-suitable-for-view images and toxic captions.} in \cite{sph+22} as the base dataset.\vspace*{0.05in}

\paragraph{Filtering captions via complexity \& actions.}
Noisy web-scale datasets do not have any semantic-based curation, and hence captions can be irrelevant, ungrammatical and unaligned. Our motivation is to decrease such noise by simply selecting captions that possess sufficient complexity, so that the training distribution matches the target tasks.
To this end, we build a fast rule-based parser that extracts objects, attributes and action relations (see Figure~\ref{fig:object-data-parser} for an example) from text and we use the resulting semantic graph to estimate the complexity of the image captions. Specifically, we define the complexity of a caption as the {\em maximum number of relations to any object} present in the parse graph. For example, in the caption ``A black cat is chasing a small brown bird,'' the object ``bird'' has the attributes ``small'', ``brown'' and ``A black cat is chasing'', and hence, the complexity of the caption is C$3$. We only retain samples that at least have a C$1$ caption complexity. To further remove pairs likely containing products, we filter out captions if they do not contain at least one action (as obtained from the parser).

\begin{figure}
    \centering
    \includegraphics[width=0.8\linewidth]{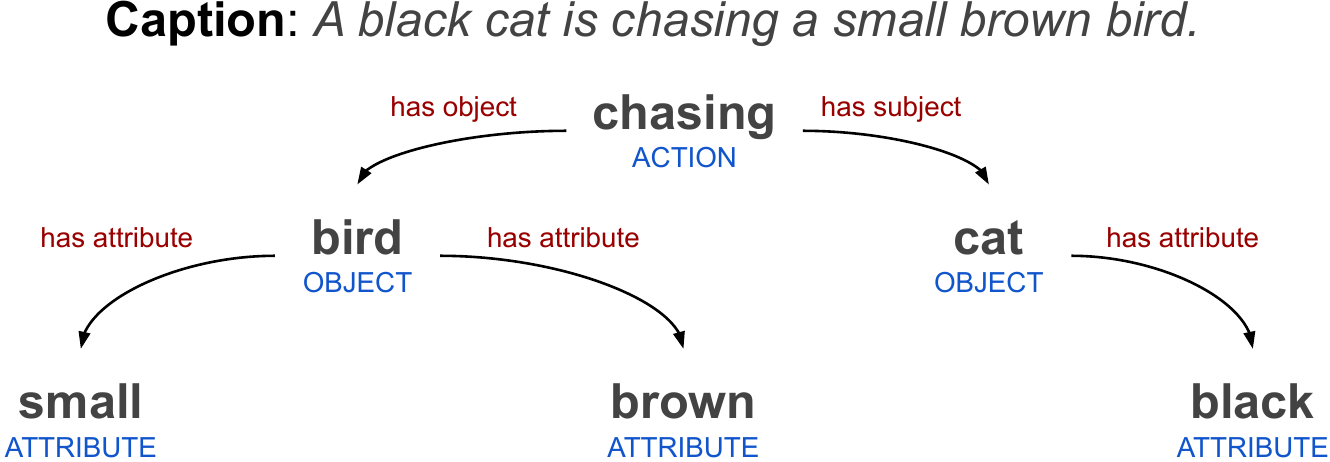}
    \vspace{-0.5em}
    \caption{An example caption and its parse. The caption has C$3$ complexity (due to {\em bird}) and has 1 action ({\em chasing}).}
    \label{fig:object-data-parser}
\end{figure}

\paragraph{Filtering images via text-spotting.}
Image-caption pairs in web-scale datasets often display part of the caption as text in the image (on visual inspection, we found up to $\sim$30\% such examples for LAION~\cite{sbv+22}). Minimizing the objective, in these cases, can correspond to spotting text (\eg, optical character recognition) rather than the high-level visual semantics (\eg, objects, attributes) we would like the model to align to. This will reduce performance on object-centric and scene-centric downstream zero-shot tasks, and hence we remove such images from the training set using an off-the-shelf text spotter~\cite{ksl+21}.
We remove image-text pairs with a text spotting confidence of at least $0.8$ and at least $5$ predicted characters matching the caption in a sliding window. We observe (by inspection) that this approach is efficient at identifying images with text, and failure cases are primarily in non-English text. Filtering with multilingual text spotters trained can fix this issue, however, we leave this as future work. Filtering statistics can be found in the supplement. 

\subsection{Concept distillation}
\label{sec:method_na}

Recognizing visual concepts in images that correspond to \objects and attributes in corresponding captions is crucial for alignment. We therefore propose to distill these concepts from a pre-trained teacher model to our image encoder. Specifically, we add two auxiliary linear classifiers on top of the encoded image embeddings $\bx$ to predict {\em(i)} \objects and {\em(ii)} \emph{visual} attributes and use the teacher model to generate the pseudo-labels for training them. These classifiers are trained jointly with the contrastive loss.

We parse image captions using a semantic parser that extracts \objects and attributes from text (Section~\ref{sec:method_filtering}) and use these as pseudo-labels. We then train the linear classifiers on the teacher model embeddings with a soft-target cross-entropy loss~\cite{gbc16}, after square-root upsampling low-frequency concepts~\cite{mgr+18}. It is important to freeze the backbone of the teacher model to make sure we retain the advantages of using a stronger model for distillation. For each image, we then use these trained linear classifiers to generate two softmax probability vectors -- $\bp^{\text{obj}}$ for \objects, and $\bp^{\text{attr}}$ for attributes, respectively. To minimize the storage overhead, we further sparsify them by retaining only the top-$k$ predicted class values and re-normalizing them to generate the final pseudo-labels. During multimodal training, we use the cross-entropy loss with these pseudo-label vectors as targets. Unless specified otherwise, we use the ViT-H/14~\cite{dbk+20} architecture pretrained from SWAG~\cite{sga+22} as the teacher model. See Section~\ref{sec:exp_ablation} and the supplementary material for ablations on the effect of different backbones and retaining top-$k$ predictions, and further details.

There are several advantages of our concept distillation approach. First, the teacher predictions capture correlations from the strong vision encoding, making them more informative as labels compared to the captions themselves. The captions are limited to a few \objects and attributes, while the teacher predictions yield a more exhaustive list. Moreover, our approach reaps the benefits of the recently proposed and publicly-available strong unimodal vision models more effectively than other distillation approaches, as training linear classifiers on a frozen teacher model is inexpensive. After predictions are stored, we discard the teacher model and thus bypass the memory and compute limitations of simultaneously running the student and teacher model in standard distillation approaches~\cite{hvd15,tcd+21}, which is critical for large teacher models. We demonstrate empirically (see Section~\ref{sec:exp_ablation}) that our strategy works better than distilling teacher embeddings directly. Additionally, compared to approaches that warm-start the image encoder with pre-trained models, our method can leverage higher capacity teacher models without difficulty and unlike locked-image tuning~\cite{pdgk21,zwm+22}, our approach gives the flexibility of training the image encoder for better alignment, while retaining the strength of the pre-trained visual features.

\subsection{Multimodal alignment with hard negatives}
\label{sec:method_hn}
Contrastive learning~\cite{olv18} has quickly become the de-facto approach for multimodal alignment, where most prior work focuses on the multimodal InfoNCE~\cite{olv18} objective, given for any batch $\bX = \{(\bx_i,\bt_i)\}_{i=1}^n$ of featurized image-text pairs as (for some learnable temperature $\tau > 0$),
{\small
\begin{align*}
\cL_{\text{NCE}}(\bX) = -\sum_{i=1}^n\left[\log\frac{e^{\bx_i^\top\bt_i/\tau}}{\sum_{j}e^{\bx_i^\top\bt_j/\tau}}+\log\frac{e^{\bx_i^\top\bt_i/\tau}}{\sum_{j}e^{\bx_j^\top\bt_i/\tau}}\right].
\end{align*}
}
While this approach has enjoyed immense success in multimodal alignment~\cite{rkh+21,jyx+21}, when learning from large-scale noisy datasets, uniform sampling as applied in {\em noise-contrastive estimation} can often provide negative samples that are not necessarily discriminative, necessitating very large batch sizes. For the problem of contrastive self-supervised learning, Robinson~\etal~\cite{rcsj21} propose an importance-sampling approach to reweight negative samples within a batch so that ``harder'' negatives are up-sampled in proportion to their difficulty. We present a similar strategy for multimodal alignment. Specifically, for some $\alpha \in (0, 1], \beta \geq 0$, we propose the following {\em hard-negative} noise contrastive multimodal alignment objective:
{\small
\begin{align*}
    \cL_{\text{HN-NCE}}(\bX) = - &\sum_{i=1}^n \log\left[\frac{e^{\bx_i^\top\bt_i/\tau}}{\alpha\cdot e^{\bx_i^\top\bt_i/\tau}+\underset{j\neq i}{\sum} e^{\bx_i^\top\bt_j/\tau} w^{i\rightarrow t}_{\bx_i, \bt_j}}\right] \\
    -&\sum_{i=1}^n \log\left[\frac{e^{\bx_i^\top\bt_i/\tau}}{\alpha\cdot e^{\bx_i^\top\bt_i/\tau}+\underset{j\neq i}{\sum}  e^{\bx_j^\top\bt_i/\tau}w^{t\rightarrow i}_{\bx_j, \bt_i}}\right].
\end{align*}
    }
Where the weighing functions are given as\footnote{We normalize by $n-1$ as this is the number of negatives.}: 
{\small
\begin{align*}
    w^{i\rightarrow t}_{\bx_i, \bt_j} = \frac{(n-1)\cdot e^{\beta\bx_i^\top\bt_j/\tau}}{\sum_{k\neq i}e^{\beta\bx_i^\top\bt_k/\tau}}, 
    w^{t\rightarrow i}_{\bx_j, \bt_i} = \frac{(n-1)\cdot e^{\beta\bx_j^\top\bt_i/\tau}}{\sum_{k\neq i} e^{\beta\bx_k^\top\bt_i/\tau}}.
\end{align*}
}
The weights $w_\beta$ are designed such that difficult negative pairs (with higher similarity) are emphasized, and easier pairs are ignored. Furthermore, $\alpha$ rescales the normalization with the positive terms to account for the case when false negatives are present within the data. The form of weights $w_\beta$ is an unnormalized von Mises-Fisher distribution~\cite{mj00} with concentration parameter $\beta$. Observe that we obtain the original objective when setting $\alpha=1$ and $\beta=0$. There are several key differences with the original formulation of~\cite{rcsj21} and the {\sc HN-NCE} objective presented above. First, we utilize only cross-modal alignment terms, instead of the unimodal objective presented in~\cite{rcsj21}. Next, we employ separate penalties for text-to-image and image-to-text alignment. Finally, we incorporate a learnable temperature parameter $\tau$ to assist in the learning process. We discuss our design choices in more detail with additional theoretical and experimental justifications in the supplementary material.

\subsection{Training objective} 
\label{sec:objective}

For any batch $\bX = \{(\bx_i, \bt_i)_{i=1}^n\}$ of $n$ image-text pairs, we minimize the following objective:
\begin{align*}
    &\cL_{\text{HN-NCE}}(\bX) + \cL_{\text{CE-O}}(\bX) + \cL_{\text{CE-A}}(\bX), \text{where,} &\\
    &\cL_{\text{CE-O}}(\bX) = \sum_{i=1}^n \textsc{Cross-Entropy}(\bp^{\text{\obj}}_{i}, f_{\text{\obj}}(\bx_i)), \text{and,}&\\  
    &\cL_{\text{CE-A}}(\bX) = \sum_{i=1}^n \textsc{Cross-Entropy}(\bp^{\text{attr}}_{i}, f_{\text{attr}}(\bx_i)).&
\end{align*}
Here, both $f_{\text{\obj}}$ and $f_{\text{attr}}$ are linear classifiers, the vectors $\bp^{\text{\obj}}, \bp^{\text{attr}}$ are the top-k predicted \objects and attributes from the teacher model (Section~\ref{sec:method_na}), and $\cL_{\textsc{hn-nce}}$ is the hard-negative contrastive alignment loss (Section~\ref{sec:method_hn}). 

\section{Experiments}
\label{sec:experiments}
Here we evaluate our approach across a broad range of vision and vision-language tasks.
We provide extensive ablations on 29 tasks over the design choices in Section~\ref{sec:exp_ablation}, and compare with state-of-the-art approaches on popular zero-shot benchmarks in Section~\ref{sec:exp_sota}.
Finally, we present an alternate approach to do few-shot classification with prompt-based initialization in Section~\ref{sec:exp_few}.

\subsection{Experimental setup}
\label{sec:exp_setup}

\paragraph{Training datasets.}
We use a 2.1B English caption subset of the LAION-5B dataset~\cite{sbv+22}.
Prior to training, we filter out sample pairs with NSFW images, toxic words in the text, or images with a watermark probability larger than $0.5$, following~\cite{sph+22}.
This leaves us with 1.98B images, which we refer to throughout the paper as the LAION-2B dataset. Additionally, we explore training our models on a collection of Public Multimodal Datasets (PMD) from~\cite{shg+22}. PMD contains training splits of various public datasets. After downloading\footnote{Downloaded following \href{https://huggingface.co/datasets/facebook/pmd}{huggingface.co/datasets/facebook/pmd}.} the data we are left with 63M (\vs 70M reported in~\cite{shg+22}) image-text pairs due to missing samples and SBU Captions~\cite{okb11} (originally in PMD) going offline.

\paragraph{Training details.}
For our model architecture, we closely follow CLIP by Radford~\etal~\cite{rkh+21}.
We utilize Vision Transformers (ViT)~\cite{dbk+20} for images and Text Transformers~\cite{vsp+17} for captions.
We experiment with 3 different architectures, denoted as B/32, B/16, and L/14, where 32, 16, and 14 denote the input image patch size. See the supplementary for architecture details.
For distillation and fine-tuning experiments, we utilize the public SWAG-ViT models~\cite{sga+22}, pre-trained with weak supervision from hashtags.

We use the Adam~\cite{kb15} optimizer with a decoupled weight decay~\cite{lh19} and a cosine learning rate schedule~\cite{lh17}. The input image size is 224$\times$224 pixels. To accelerate training and save memory, we use mixed-precision training~\cite{mna+18}.
All hyperparameters are presented in the supplementary.
They are selected by training B/32 on a small scale setup, and reused for all architectures.
For \objects and attributes classifiers, we found that scaling the learning rate by 10.0 and weight decay by 0.01 gave better results.
We train our models on 4B, 8B, 16B, and 32B total samples. 
For ViT-L/14, we further train the model at a higher 336px resolution for 400M samples, denoting this model as L/14@336.
We trained L/14 for 6 days on 512 A100 GPUs with 16B processed samples for a total of $7.4 \times 10^4$ GPU hours.

\paragraph{Evaluation benchmarks.}
We evaluate our models on a zero-shot benchmark of 29 tasks: (i) 17 image classification, (ii) 10 cross-modal retrieval, (iii) 2 visual question answering. Dataset details are presented in the supplement.

\subsection{Ablations on zero-shot benchmarks}
\label{sec:exp_ablation}

In this section, we ablate our three pretraining contributions: dataset filtering, distillation from \objects and attributes predictions, and, hard negative contrastive objective.
Ablations are performed over zero-shot Accuracy@1 on the ImageNet1K~\cite{rds+15} (IN) validation set, text-to-image (T2I) and image-to-text (I2T) zero-shot Recall@1 on the COCO~\cite{puc+20} and Flickr~\cite{pwl+15} test sets.
We also report the change in accuracy (\%) over 29 zero-shot tasks between our model and baselines.
For a fair comparison, we train all approaches presented in this section (including baselines).

\vspace{-1em}
\paragraph{Effect of dataset filtering.}
We apply our filters, as well as filtering based on CLIP~\cite{rkh+21} alignment score ($<$0.35), and ablate the baseline performance, without distillation or hard negative contrastive training, in Table~\ref{tab:fitering} for ViT-B/32 model architecture.
All models see 4B total samples during training, while the number of unique samples drops after each filtering step.
Complexity filter (C) in row (3) reduces the dataset size by around 270M, while slightly increasing image-text alignment as observed on T2I task.
Next, action filter (A) in row (4) reduces the size by more than 1B, while it has a large benefit in aligning complex text.
However, as expected, it hurts performance on object-centric ImageNet.
Finally, text-spotting (T) filter in row (5) boosts alignment across the board, due to the fact that it removes the need to learn a bimodal visual representation of the text.
We also compare with filtering based on CLIP score in row (2), which was selected such that the dataset size is comparable to ours, and show that it is too strict and removes plenty of useful training pairs, thus hurting the performance.
Finally, LAION-CAT, with only {\bf 22}\% of the original dataset size, significantly boosts image-text zero-shot performance. We also observed that gains hold as we train for longer training schedules. See the supplementary for details.

\begin{table}[t]
\centering
\caption{
    Evaluating effect of using LAION-2B subset filtered on complexity (C), actions (A), and text-spotting (T). 
    CLIP denotes filtering pairs with CLIP score bellow 0.35.
    Evaluation performed on ViT-B/32 model architecture trained for 4B processed samples.
}
\label{tab:fitering}
\vspace{-1em}
\def\arraystretch{0.85}  
\setlength{\tabcolsep}{1mm}  
\def\cw{0.3cm}
\def\cww{0.7cm}
\def\hs{1mm}
\footnotesize{
    \begin{tabular}{C{0.2cm}C{0.6cm}C{\cw}C{\cw}C{\cw}C{0.8cm}C{\cww}C{\cww}C{\cww}C{\cww}C{\cww}}
        \toprule
        \multirow{2}{*}{\#} & \multicolumn{4}{c}{Filter} & \multirow{2}{*}{Size} & \multirow{2}{*}{IN} & \multicolumn{2}{c}{COCO} & \multicolumn{2}{c}{Flickr} \\
        \cmidrule(l{\hs}r{\hs}){2-5}\cmidrule(l{\hs}r{\hs}){8-9}\cmidrule(l{\hs}r{\hs}){10-11}
         & \scriptsize{CLIP} & C & A & T & & & T2I & I2T & T2I & I2T \\
        \midrule
        1 & & & & & 1.98B & 60.8 & 33.7 & 52.1 & 59.3 & 77.7 \\
        2 & \checkmark & & & & 440M & 52.5 & 29.8 & 46.1 & 54.8 & 72.0 \\
        3 & & \checkmark & & & 1.71B & 60.8 & 33.9 & 52.5 & 60.8 & 77.8 \\
        4 & & \checkmark & \checkmark & & 642M & 58.7 & 35.9 & 53.8 & 64.3 & 82.0 \\
        5 & & \checkmark & \checkmark & \checkmark & 438M & {\bf61.5} & {\bf37.6} & {\bf55.9} & {\bf66.5} & {\bf83.2} \\
        \bottomrule
    \end{tabular}
}
\vspace{-1em}
\end{table}

\vspace{-1em}
\paragraph{Effect of distillation approach.} 
To understand the effect of direct distillation from a pre-trained SWAG-ViT visual encoder~\cite{sga+22}, we investigate two baseline approaches:

\noindent{(1) \it{Embedding distillation (ED)}} borrows from SimSiam~\cite{xk21} and uses an auxiliary negative cosine similarity loss between the image representation from the student visual encoder and the pre-trained SWAG model.\\
\noindent{(2) \it{Distribution distillation (DD)}} borrows ideas from momentum distillation in ALBEF~\cite{lsg+21} and computes the cross-modal similarities between the SWAG image representation and the student text representation and uses them as soft-labels for student image representation and text alignment. The soft-labels are linearly combined with the hard $0-1$ labels before applying the InfoNCE~\cite{olv18} loss. 

A comparison of our distillation from predicted concepts (CD) with the aforementioned distillation approaches is presented in Table~\ref{tab:distillation} (upper section). Note that for a fair comparison, we do not use our hard-negative contrastive loss for these experiments.
Our distillation approach performs the best, even though it has virtually no training overhead as the predicted concepts are pre-computed, while, \eg, ED is 60\% slower with an 8\% increase in GPU memory due to the need of running an additional copy of the vision tower.
One could pre-compute embeddings for ED and DD as well, but that increases dataset size by 1.2TB and creates a data loading bottleneck, while our pre-computed predictions take only 32.6GB additional storage space when saving the top-10 predictions (see supplementary).
We additionally show that our approach is robust to the number of top-$k$ predictions used, details in the supplementary.

One could also use an external unimodal image model and fine-tune it on the image-text alignment task instead of using distillation.
We follow \cite{zwm+22} and explore three fine-tuning options as baselines: (i) locked-image tuning (LiT) where the image encoder is locked, and only the text encoder is trained, (ii) fine-tuning (FT) where the image encoder is trained with a learning rate scaled by 0.01 compared to the text encoder, (iii) fine-tuning with delay (FT-delay) where the image encoder is locked for half of the pre-training epochs following (i), and then fine-tuned for the rest following (ii).
Results of these setups are ablated in Table~\ref{tab:distillation} (lower section).
LiT \vs FT is a trade-off between strong performance on image recognition tasks (as measured with ImageNet1K) and better image-text alignment (as measured by COCO and Flickr).
Locking the image encoder makes the alignment very hard to achieve, but fine-tuning it hurts its original image recognition power. 
On the other hand, we show that our concept distillation is the best of both worlds, it surpasses LiT or FT in 4 out of 5 metrics.
Another drawback of FT is that it requires the same architecture in the final setup, while CD can be effortlessly combined with any architecture or training setup, by using stored predictions as metadata. To conclude, unlike related approaches, our proposed distillation: (i) has almost no cost at training, (ii) is architecture agnostic, (iii) improves both image recognition and complex image-text alignment.

\begin{table}[t]
\centering
\caption{
    Evaluating effect of using different initialization or distillation approaches.
    Evaluation performed on ViT-B/16 model architecture trained for 16B processed samples on LAION-CAT.
    Init: Initialization with random or SWAG-B/16 weights.
    ED: Embedding distillation.
    DD: Distribution distillation.
    LiT: Locked image tuning.
    FT: Fine-tuning.
    FT-delay: Locked image tuning for 50\% followed by fine-tuning for the rest.
    CD: Our concept distillation using teacher-predicted \objects and attributes.
}
\vspace{-1em}
\label{tab:distillation}
\def\arraystretch{0.85}  
\setlength{\tabcolsep}{1mm}  
\def\cww{0.7cm}
\def\hs{1mm}
\centering
\footnotesize{
    \begin{tabular}{C{0.3cm}L{1.5cm}C{0.9cm}C{\cww}C{\cww}C{\cww}C{\cww}C{\cww}}
        \toprule
        \multirow{2}{*}{\rotatebox[origin=c]{90}{Init}} & \multirow{2}{*}{Method} & SWAG & \multirow{2}{*}{IN} & \multicolumn{2}{c}{COCO} & \multicolumn{2}{c}{Flickr} \\
        \cmidrule(l{\hs}r{\hs}){5-6}\cmidrule(l{\hs}r{\hs}){7-8}
        & & \scriptsize{(teacher)} & & T2I & I2T & T2I & I2T \\
        \midrule
        \multirow{5}{*}{\rotatebox[origin=c]{90}{Random}}
        & Baseline & --- & 68.7 & 42.8 & 60.5 & 72.8 & {\bf89.7} \\
        & ED & B/16 & 69.2 & 42.6 & 59.4 & 72.8 & 86.8 \\
        & DD & B/16 & 68.6 & 41.8 & 57.4 & 71.7 & 87.0 \\
        & CD (ours) & B/16 & 71.0 & 42.8 & 59.5 & 72.3 & 86.5 \\
        & CD (ours) & H/14 & 72.3 & {\bf43.4} & 60.4 & {\bf73.8} & 87.6 \\
        \midrule
        \multirow{3}{*}{\rotatebox[origin=c]{90}{SWAG}}
        & LiT & --- & {\bf73.0} & 32.5 & 50.6 & 60.8 & 79.6 \\
        & FT & --- & 71.2 & 43.1 & 60.3 & 73.1 & 87.7 \\
        & FT-delay & --- & 72.0 & 42.7 & {\bf60.7} & 72.5 & 86.2 \\
        \bottomrule
    \end{tabular}
}
\end{table}

\vspace{-1em}
\paragraph{Effect of hard negative contrastive training.}
We present the ablation when using hard negative contrastive objective (HN-NCE) in Table~\ref{tab:hn}.
Performance suggests that using the newly proposed loss is beneficial compared to the vanilla InfoNCE, and that its positive effects are complementary to the gains from the proposed distillation from objects and attributes predictions. Please see the supplementary for ablations on the effect of the hyperparameters $\alpha$ and $\beta$.

\begin{table}[t]
\centering
\vspace{-0.5em}
\caption{
    Evaluating effect of using hard negative contrastive loss. 
    Evaluation performed on ViT-B/16 model architecture trained for 16B processed samples on LAION-CAT.
    CD: Our concept distillation using SWAG-H/14 predicted \objects and attributes.
    HN: Our proposed hard negative contrastive loss.
}
\vspace{-1em}
\label{tab:hn}
\def\arraystretch{0.85}  
\setlength{\tabcolsep}{1mm}  
\def\cw{1.0cm}
\def\cww{0.7cm}
\def\hs{1mm}
\centering
\footnotesize{
    \begin{tabular}{C{0.2cm}C{\cw}C{\cw}C{\cww}C{\cww}C{\cww}C{\cww}C{\cww}}
        \toprule
        \multirow{2}{*}{\#} & \multicolumn{2}{c}{Method} & \multirow{2}{*}{IN} & \multicolumn{2}{c}{COCO} & \multicolumn{2}{c}{Flickr} \\
        \cmidrule(l{\hs}r{\hs}){2-3}\cmidrule(l{\hs}r{\hs}){5-6}\cmidrule(l{\hs}r{\hs}){7-8}
         & CD & HN & & T2I & I2T & T2I & I2T \\
        \midrule
        1 & & & 68.7 & 42.8 & 60.5 & 72.8 & {\bf89.7} \\
        2 & \checkmark & & {\bf72.3} & 43.4 & 60.4 & {\bf73.8} & 87.6 \\
        3 & \checkmark & \checkmark & 72.0 & {\bf43.7} &{\bf 62.0} & 73.2 & 89.5 \\
        \bottomrule
    \end{tabular}
}
\vspace{-1em}
\end{table}

\vspace{-1em}
\paragraph{Effect when pre-training on PMD.}
Finally, we analyze our proposed recipes when training visual-language models on a much smaller dataset, \ie PMD with 63M training samples.
Results are shown in Table~\ref{tab:pmd}.
All contributions improve the performance over baseline significantly, hence we conclude that using the proposed pipeline is very beneficial in low-resource training regimes\footnote{PMD is smaller and relatively much cleaner dataset compared to LAION. Hence, we observed that our filtering step is not needed for it.}.
Note that, the PMD dataset contains COCO and Flickr training samples, hence, it is not strictly zero-shot evaluation.
For that reason, we do not compare our models trained on PMD dataset with state-of-the-art models in the following section.
However, we believe these strong findings will motivate usage of our approach on smaller and cleaner datasets, as well.

\begin{table}[t]
\centering
\caption{
    Evaluating effect when pre-training on PMD using our approaches. 
    Evaluation performed on ViT-B/32 and ViT-B/16 models trained for 4B processed samples.
    CD: Our concept distillation using SWAG-H/14 predicted \objects (-O) and attributes (-A).
    HN: Our proposed hard negative contrastive loss.
}
\vspace{-1em}
\label{tab:pmd}
\def\arraystretch{0.85}  
\setlength{\tabcolsep}{1mm}  
\def\cw{0.75cm}
\def\cww{0.7cm}
\def\hs{1mm}
\centering
\footnotesize{
    \begin{tabular}{C{0.3cm}C{0.2cm}C{\cw}C{\cw}C{\cw}C{\cww}C{\cww}C{\cww}C{\cww}C{\cww}}
        \toprule
        \multirow{2}{*}{\rotatebox[origin=c]{90}{Arch.}} & \multirow{2}{*}{\#} & \multicolumn{3}{c}{Method} & \multirow{2}{*}{IN} & \multicolumn{2}{c}{COCO} & \multicolumn{2}{c}{Flickr} \\
        \cmidrule(l{\hs}r{\hs}){3-5}\cmidrule(l{\hs}r{\hs}){7-8}\cmidrule(l{\hs}r{\hs}){9-10}
         & & CD-O & CD-A & HN & & T2I & I2T & T2I & I2T \\
        \midrule
        \multirow{4}{*}{\rotatebox[origin=c]{90}{B/32}}
        & 1 & & & & 49.0 & 28.9 & 50.2 & 62.0 & 80.3  \\
        & 2 & \checkmark & & & 57.8 & 32.2 & 54.0 & 65.6 & 85.7 \\
        & 3 & \checkmark & \checkmark & & 59.7 & 34.4 & 55.7 & 68.3 & 87.8 \\
        & 4 & \checkmark & \checkmark & \checkmark & {\bf62.4} & {\bf37.3} & {\bf60.4} & {\bf71.8} & {\bf89.9} \\
        \midrule
        \multirow{3}{*}{\rotatebox[origin=c]{90}{B/16}}
        & 5 & & & & 54.6 & 33.1 & 55.7 & 67.4 & 85.5 \\
        & 6 & \checkmark & \checkmark & & 65.5 & 37.4 & 59.9 & 72.4 & 88.7 \\
        & 7 & \checkmark & \checkmark & \checkmark & {\bf67.8} & {\bf42.7} & {\bf65.5} & {\bf77.6} & {\bf92.5} \\
        \bottomrule
    \end{tabular}
}
\end{table}

\vspace{-1em}
\paragraph{Zero-shot benchmarks.}
We denote model trained with our proposed concept distillation and hard-negative loss as \ours.
To showcase our model's performance in more detail, we report our \ours-B/16 trained on LAION-CAT with 438M samples \vs CLIP-B/16 baseline trained by us on LAION-2B with 2B samples in Figure~\ref{fig:barlaion}.
Additionally, we report \ours-B/16 \vs CLIP-B/16 baseline, where both models are trained on PMD dataset with 63M samples in Figure~\ref{fig:barpmd}.
When trained on LAION-CAT or LAION-2B, respectively, \ours wins on 20 out of 29 benchmark tasks. 
Impressively, when trained on PMD, \ours wins on 28 out of 29 benchmarks tasks, usually with a very large margin.

\begin{figure}
    \centering
    \vspace{-1em}
    \begin{tikzpicture}
    \begin{axis} [
        width=1.05\linewidth,
        height=4.6cm,
        font=\tiny,
        ybar,
        bar width=6pt,
        axis x line=center,
        axis y line=box,
        x axis line style={-,thick},
        y axis line style={thick},
        ytick align=outside,
        ytick={-45,-40,...,45},
        yticklabel style={rotate=90, yshift=-2pt},
        xtick={1,2,...,9},
        xticklabels={\cite{evw+10}~PascalVOC,\cite{ffp04}~Caltech101,\cite{ksdf13}~StanfordCars,\cite{xeh+16}~SUN397,\cite{blw+14}~Birdsnap,\cite{rkh+21}~Country211,\cite{kh09}~CIFAR10,\cite{kra+20}~OpenImages,\cite{nz08}~Flowers102,},
        xticklabel style={yshift=2pt, anchor=west, rotate=90},
        extra x ticks={10,11,...,29},
        extra x tick style={xticklabel style={yshift=0pt, anchor=east}},
        extra x tick labels={CIFAR100~\cite{kh09},STL10~\cite{cnl11},COCO I2T~\cite{lmb+14},HatefulMemes~\cite{kfm20},OxfordPets~\cite{pvzj12},VQAv2~\cite{gks+17},DTD~\cite{cmk+14},LN-COCO I2T~\cite{puc+20},SNLI-VE~\cite{xldk19},Winoground T2I~\cite{tjb+22},Food101~\cite{bgg14},LN-COCO T2I~\cite{puc+20},COCO T2I~\cite{lmb+14},ImageNet1K~\cite{rds+15},UCF101~\cite{szs12},Flickr T2I~\cite{pwl+15},LN-Flickr I2T~\cite{puc+20},Flickr I2T~\cite{pwl+15},Winoground I2T~\cite{tjb+22},LN-Flickr T2I~\cite{puc+20},},
        xmin = 0.35,
        xmax = 29.65,
        ymin = -20,
        ymax = 20,
        ylabel = {Score Difference (\%)},
        ylabel near ticks,
        ylabel shift = -5pt,
        nodes near coords,
        nodes near coords align={horizontal},
        nodes near coords style={/pgf/number format/fixed,/pgf/number format/fixed zerofill,/pgf/number format/precision=2, rotate=90},
    ]
    \addplot[black,fill=red] coordinates {
        (1.50,-7.62)
        (2.50,-5.63)
        (3.50,-3.07)
        (4.50,-2.81)
        (5.50,-2.22)
        (6.50,-1.94)
        (7.50,-1.86)
        (8.50,-1.38)
        (9.50,-1.04)
    };
    \addplot[black,fill=green] coordinates {
        (9.50,0.25)
        (10.50,0.25)
        (11.50,0.42)
        (12.50,0.49)
        (13.50,0.57)
        (14.50,0.71)
        (15.50,0.96)
        (16.50,0.96)
        (17.50,0.99)
        (18.50,1.00)
        (19.50,1.26)
        (20.50,1.40)
        (21.50,1.81)
        (22.50,1.89)
        (23.50,3.23)
        (24.50,3.26)
        (25.50,3.40)
        (26.50,3.80)
        (27.50,4.13)
        (28.50,7.33)
    };
    \end{axis}
    \end{tikzpicture}
    \vspace{-1.5em}
    \caption{
        \ours-B/16 trained on LAION-CAT with 438M samples \vs CLIP-B/16 trained on LAION-2B with 2B samples. Both models trained by us with 32B total processed samples.
    }
    \label{fig:barlaion}
\end{figure}
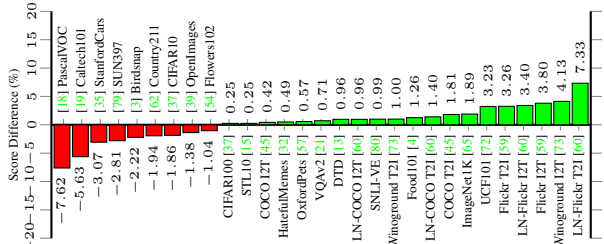

\begin{figure}
    \centering
    \vspace{-1.5em}
    \begin{tikzpicture}
    \begin{axis} [
        width=1.05\linewidth,
        height=4.6cm,
        font=\tiny,
        ybar,
        bar width=6pt,
        axis x line=center,
        axis y line=box,
        x axis line style={-,thick},
        y axis line style={thick},
        ytick align=outside,
        ytick={-45,-40,...,45},
        yticklabel style={rotate=90, yshift=-2pt},
        xtick={1},
        xticklabels={\cite{ffp04}~Caltech101},
        xticklabel style={yshift=2pt, anchor=west, rotate=90},
        extra x ticks={2,3,...,29},
        extra x tick style={xticklabel style={yshift=0pt, anchor=east}},
        extra x tick labels={SNLI-VE~\cite{xldk19},VQAv2~\cite{gks+17},PascalVOC~\cite{evw+10},HatefulMemes~\cite{kfm20},STL10~\cite{cnl11},Winoground T2I~\cite{tjb+22},LN-Flickr I2T~\cite{puc+20},Country211~\cite{rkh+21},LN-COCO I2T~\cite{puc+20},Flickr I2T~\cite{pwl+15},Winoground I2T~\cite{tjb+22},LN-COCO T2I~\cite{puc+20},OpenImages~\cite{kra+20},COCO T2I~\cite{lmb+14},LN-Flickr T2I~\cite{puc+20},COCO I2T~\cite{lmb+14},Food101~\cite{bgg14},Flickr T2I~\cite{pwl+15},SUN397~\cite{xeh+16},CIFAR10~\cite{kh09},ImageNet1K~\cite{rds+15},UCF101~\cite{szs12},DTD~\cite{cmk+14},OxfordPets~\cite{pvzj12},StanfordCars~\cite{ksdf13},Birdsnap~\cite{blw+14},Flowers102~\cite{nz08},CIFAR100~\cite{kh09},},
        xmin = 0.35,
        xmax = 29.65,
        ymin = -20,
        ymax = 20,
        ylabel = {Score Difference (\%)},
        ylabel near ticks,
        ylabel shift = -5pt,
        nodes near coords,
        nodes near coords align={horizontal},
        nodes near coords style={/pgf/number format/fixed,/pgf/number format/fixed zerofill,/pgf/number format/precision=2, rotate=90},
    ]
    \addplot[black,fill=red] coordinates {
        (1.50,-0.61)
    };
    \addplot[black,fill=green] coordinates {
        (1.50,1.37)
        (2.50,2.16)
        (3.50,2.42)
        (4.50,2.95)
        (5.50,3.00)
        (6.50,4.76)
        (7.50,5.50)
        (8.50,6.26)
        (9.50,6.70)
        (10.50,7.00)
        (11.50,7.13)
        (12.50,7.80)
        (13.50,9.00)
        (14.50,9.57)
        (15.50,9.71)
        (16.50,9.78)
        (17.50,9.90)
        (18.50,10.14)
        (19.50,10.57)
        (20.50,11.88)
        (21.50,13.21)
        (22.50,13.22)
        (23.50,13.46)
        (24.50,13.52)
        (25.50,15.32)
        (26.50,15.73)
        (27.50,16.83)
        (28.50,18.13)
    };
    \end{axis}

    \end{tikzpicture}
    \vspace{-1.5em}
    \caption{
        \ours-B/16 \vs CLIP-B/16. Both models trained by us on PMD with 63M images and 4B total processed samples.
    }
    \label{fig:barpmd}
    \vspace{-1em}
\end{figure}
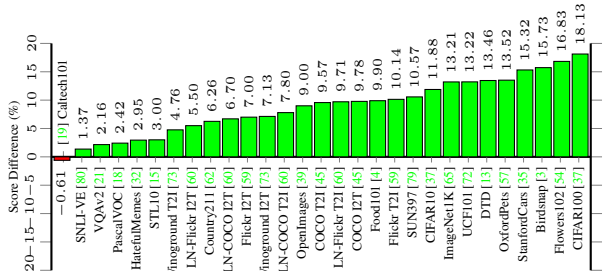

\vspace{-1em}
\paragraph{Robustness to distribution shift.}
We evaluate the ImageNet-related robustness on different datasets in Table~\ref{tab:ood}, for DiHT-B/16 \vs CLIP-B/16 baseline, trained by us on PMD and LAION datasets.
Our proposed approach improves on robustness over vanilla CLIP, in some cases by a significant margin, \eg ImageNet-A and ObjectNet.

\begin{table}[t]
\centering
\caption{
    ImageNet robustness performance on ViT-B/16 models.
}
\vspace{-1em}
\label{tab:ood}
\def\arraystretch{0.85}  
\setlength{\tabcolsep}{0.4mm}  
\def\cw{0.7cm}
\def\cww{0.7cm}
\def\hs{1mm}
\newcommand\cs[1]{#1}
\newcommand\css[1]{\scriptsize{#1}}
\newcommand\csss[1]{\tiny{#1}}
\centering
\footnotesize{
    \begin{tabular}{L{1cm}L{1.7cm}C{0.7cm}C{\cww}C{\cww}C{\cww}C{\cww}C{\cww}C{\cww}}
        \toprule
        \multirow{3}{*}{Method} & \multirow{3}{*}{Train Data} & \multirow{3}{*}{\#D} & \rotatebox[origin=b]{90}{IN~~~~~} & \rotatebox[origin=b]{90}{IN-V2~~} & \rotatebox[origin=b]{90}{IN-A~~~} & \rotatebox[origin=b]{90}{IN-R~~~} & \rotatebox[origin=b]{90}{IN-Sketch} & \rotatebox[origin=b]{90}{ObjectNet} \\
        \midrule
        CLIP & PMD & 63M & 54.6 & 47.9 & 35.5 & 56.3 & 30.8 & 38.5 \\
        DiHT & PMD & 63M & \textbf{67.8} & \textbf{61.5} & \textbf{54.3} & \textbf{74.4} & \textbf{44.7} & \textbf{52.8} \\
        \midrule
        CLIP & LAION-2B & 1.98B & 70.3 & 62.7 & 39.3 & 81.0 & 57.1 & 56.2 \\
        DiHT & LAION-CAT & 438M & \textbf{72.2} & \textbf{64.3} & \textbf{49.2} & \textbf{85.1} & \textbf{58.3} & \textbf{62.3} \\
        \bottomrule
    \end{tabular}
}
\vspace{-1.5em}
\end{table}

\subsection{Comparison with zero-shot state of the art}
\label{sec:exp_sota}

We compare our \ours models against state-of-the-art dual-encoder models in Table~\ref{tab:sota}.
Given that all models use different architectures, input image resolutions, training databases, and number of processed samples at training, we outline those details in the table, for easier comparison.

Our approach is most similar to CLIP~\cite{rkh+21} and OpenCLIP~\cite{iww+21}, and has same training complexity and inference complexity.
We outperform models with same architecture by substantial margins, even when our training dataset is much smaller.
Our best models \ours-L/14 and \ours-L/14@336 trained at higher $336$px resolution for additional 400M samples outperform models with significantly more complexity on popular text-image COCO and Flickr benchmarks. Compared to ALIGN~\cite{jyx+21} that has approximately twice the number of parameters compared to our \ours-L/14 model and is trained on 4x bigger data, we improve the performance substantially for all the retrieval benchmarks. Our model also performs better than FILIP~\cite{yhh+21} which utilizes token-wise similarity to compute the final alignment, thus noticeably increasing the training speed and memory cost. We also outperform Florence~\cite{lcc+21} on all $4$ retrieval benchmarks. Note that Florence~\cite{lcc+21} utilizes a more recent and powerful Swin-H Vision Transformer architecture~\cite{llc+21} with convolutional embeddings~\cite{wxc+21}, and a unified contrastive objective~\cite{ylz+22}.
Our proposed contributions are complementary to FILIP~\cite{yhh+21} and Florence~\cite{lcc+21}, and we believe additional gains can be achieved when combined.
Finally, LiT~\cite{zwm+22} and BASIC~\cite{pdgk21} first pre-train model on an large-scale image annotation dataset with cross-entropy before further training with contrastive loss on an image-text dataset.
Though this strategy results in state-of-the-art performance on ImageNet1K~\cite{rds+15} and image classification benchmarks, it has severe downsides on multi-modal tasks such as cross-modal retrieval.
Our ablation in Section~\ref{sec:exp_ablation} also confirms this issue.
On the other hand, our approach does not suffer from such negative effects.

\begin{table}[t]
\centering
\caption{
    Comparison with zero-shot state-of-the-art dual-encoder models.
    px: input image size; \#P: model size; \#D: training dataset size; \#S: total samples processed at training.
    We evaluate CLIP~\cite{rkh+21} and OpenCLIP~\cite{iww+21} using our codebase, other numbers are copied from respective papers.
    Grouped models (\eg, ViT-B/32) share same vision and language architecture as our model, following CLIP~\cite{rkh+21}, others have different architectures and we outline the vision one.
    $^*$FILIP uses token-wise similarity, which is more expensive than global-token similarity and requires adapting the architecture, hence we put it in ``Other''.
}
\vspace{-1em}
\label{tab:sota}
\def\arraystretch{0.85}  
\setlength{\tabcolsep}{0.4mm}  
\def\cw{0.7cm}
\def\cww{0.6cm}
\def\hs{1mm}
\newcommand\cs[1]{#1}
\newcommand\css[1]{\scriptsize{#1}}
\newcommand\csss[1]{\tiny{#1}}
\centering
\footnotesize{
    \begin{tabular}{L{1.8cm}C{0.6cm}C{\cw}C{\cw}C{\cw}C{\cww}C{\cww}C{\cww}C{\cww}C{\cww}}
        \toprule
        \multirow{2}{*}{Method} & \multirow{2}{*}{px} & \multirow{2}{*}{\#P} & \multirow{2}{*}{\#D} & \multirow{2}{*}{\#S} & \multirow{2}{*}{IN} & \multicolumn{2}{c}{COCO} & \multicolumn{2}{c}{Flickr} \\
        \cmidrule(l{\hs}r{\hs}){7-8}\cmidrule(l{\hs}r{\hs}){9-10}
        & & & & & & T2I & I2T & T2I & I2T \\
        \midrule
        \multicolumn{8}{l}{~~ViT-B/32} \\
        \cmidrule(l{\hs}r{\hs}){1-1}
        CLIP~\cite{rkh+21} & \cs{224} & \cs{151M} & \cs{400M} & \cs{12.8B} & 63.4 & 31.4 & 49.0 & 59.5 & 79.9 \\
        OpenCLIP~\cite{iww+21} & \cs{224} & \cs{151M} & \cs{400M} & \cs{12.8B} & 62.9 & 34.8 & 52.3 & 61.7 & 79.2 \\
        OpenCLIP~\cite{iww+21} & \cs{224} & \cs{151M} & \cs{2.3B} & \cs{34B} & 66.6 & 39.0 & 56.7 & 65.7 & 81.7 \\
        \ours & \cs{224} & \cs{151M} & \cs{438M} & \cs{16B} & 67.5 & 40.3 & 56.3 & 67.9 & 83.8 \\
        \ours & \cs{224} & \cs{151M} & \cs{438M} & \cs{32B} & {\bf 68.0} & {\bf 40.6} & {\bf 59.3} & {\bf 68.6} & {\bf 84.4} \\
        \midrule
        \multicolumn{8}{l}{~~ViT-B/16} \\
        \cmidrule(l{\hs}r{\hs}){1-1}
        CLIP~\cite{rkh+21} & \cs{224} & \cs{150M} & \cs{400M} & \cs{12.8B} & 68.4 & 33.7 & 51.3 & 63.3 & 81.9 \\
        OpenCLIP~\cite{iww+21} & \cs{224} & \cs{150M} & \cs{400M} & \cs{12.8B} & 67.1 & 37.8 & 55.4 & 65.2 & 84.1 \\
        OpenCLIP~\cite{iww+21} & \cs{240} & \cs{150M} & \cs{400M} & \cs{12.8B} & 69.2 & 40.5 & 57.8 & 67.7 & 85.3 \\
        \ours & \cs{224} & \cs{150M} & \cs{438M} & \cs{16B} & 71.9 & {\bf 43.7} & {\bf 62.0} & {\bf 73.2} & 89.5 \\
        \ours & \cs{224} & \cs{150M} & \cs{438M} & \cs{32B} & {\bf 72.2} & 43.3 & 60.3 & 72.9 & {\bf 89.8} \\
        \midrule
        \multicolumn{8}{l}{~~ViT-L/14} \\
        \cmidrule(l{\hs}r{\hs}){1-1}
        CLIP~\cite{rkh+21} & \cs{224} & \cs{428M} & \cs{400M} & \cs{12.8B} & 75.6 & 36.5 & 54.9 & 66.1 & 84.5 \\
        CLIP~\cite{rkh+21} & \cs{336} & \cs{428M} & \cs{400M} & \cs{13.2B} & 76.6 & 37.7 & 57.1 & 68.6 & 86.6 \\
        OpenCLIP~\cite{iww+21} & \cs{224} & \cs{428M} & \cs{400M} & \cs{12.8B} & 72.8 & 42.1 & 60.1 & 70.4 & 86.8 \\
        OpenCLIP~\cite{iww+21} & \cs{224} & \cs{428M} & \cs{2.3B} & \cs{32B} & 75.2 & 46.2 & 64.3 & 75.4 & 90.4 \\
        \ours & \cs{224} & \cs{428M} & \cs{438M} & \cs{16B} & 77.0 & 48.0 & 65.1 & 76.7 & {\bf 92.0} \\
        \ours & \cs{336} & \cs{428M} & \cs{438M} & \cs{16.4B} & {\bf 77.9} & {\bf 49.3} & {\bf 65.3} & {\bf 78.2} & 91.1\\
        \midrule
        \multicolumn{8}{l}{~~Other} \\
        \cmidrule(l{\hs}r{\hs}){1-1}
        ALIGN~\cite{jyx+21} \csss{EfficientNet-L2} & \cs{289} & \cs{820M} & \cs{1.8B} & \cs{19.7B} & 76.4 & 45.6 & 58.6 & 75.7 & 88.6 \\
        FILIP~\cite{yhh+21}$^*$ \csss{ViT-L/14} & \cs{224} & \cs{428M} & \cs{340M} & \cs{10.2B} & 77.1 & 45.9 & 61.3 & 75.0 & 89.8 \\
        OpenCLIP~\cite{iww+21} \csss{ViT-H/14} & \cs{224} & \cs{986M} & \cs{2.3B} & \cs{32B} & 77.9 & 49.0 & 67.5 & 76.8 & 91.3 \\
        Florence~\cite{lcc+21} \csss{CoSwin-H} & \cs{384} & \cs{893M} & \cs{900M} & \cs{31B} & 83.7 & 47.2 & 64.7 & 76.7 & 90.9 \\
        LiT~\cite{zwm+22} ~~~~~~~~~~\csss{ViT-g/14} & \cs{288} & \cs{2.0B} & \cs{3.6B} & \cs{18.2B} & 85.2 & 41.9 & 59.3 & --- & --- \\
        BASIC~\cite{pdgk21} \csss{CoAtNet-7} & \cs{224} & \cs{3.1B} & \cs{6.6B} & \cs{32.8B} & 85.7 & --- & --- & --- & --- \\
        \bottomrule
    \end{tabular}
}
\end{table}

\subsection{Few-shot linear probing}
\label{sec:exp_few}
The ideal scenario for leveraging zero-shot recognition models is to warm start the task without training data and then improve the performance (by training a linear probe) via few-shot learning as more and more data is seen. However, in practice, few-shot models  perform significantly worse than zero-shot models in the {\em low-data} regime.

We present an alternate approach to do few-shot classification with prompt-based initialization. The key idea of our approach is to initialize the classifier with the {\em zero-shot} text prompts for each class, but to also ensure that the final weights do not drift much from the prompt using {\em projected gradient descent (PGD)}~\cite{b04}. While few-shot models have been initialized with prompt priors in the past with naive $L_2$ penalties for weight to prevent catastrophic forgetting~\cite{kprv+17}, these approaches do not improve performance and the model simply ignores the supervision. In contrast, for any target dataset $\cD_{\text{target}} = \{(\bx_i, y_i)_{i=1}^n\}$, where $\bx_i = \bphi_{\text{image}}(I_i)$ denotes the image features from the {\em trained} image tower, we solve the following optimization problem, for some $\delta, \delta_b > 0$:
\begin{equation*}
\min_{\lVert \bW \rVert_2 \leq \delta, \lVert \boldb \rVert_2 \leq \delta_b}\ \sum_{i=1}^n\cL_{\text{CE}}\left(y_i, \bx_i^\top\left(\bW+\bW_\text{0}\right) + \boldb\right). 
\end{equation*}
Here $\bW_0 \in \bbR^{d\times n_c}$ denotes the prompt initialization from the text encoder. To optimize the objective, one can use projected gradient descent~\cite{b04}.  We observe that our approach is able to bridge the gap between zero-shot and 1-shot classification, a common issue in prior linear probe evaluations. 

Figure~\ref{fig:few} presents the full summary of results on the ImageNet1K~\cite{rds+15} $k$-shot classification task. Hyperparameters $\delta$ and $\delta_b$ for our approach, and weight decay for the baseline approach of training linear probes from scratch are found using grid search. Note that compared to the baseline, our method performs substantially better at very low values of $k$ and maintains the performance continuum from zero-shot to 1-shot, and so on. At large $k$ values, both approaches perform similarly, since there are sufficient data samples to render the zero-shot initialization ineffective. To further showcase the strength of our approach, we also compare our performance with linear probes trained on powerful SWAG~\cite{sga+22} models that are especially suited for this task. Note that our approach outperforms the much larger SWAG ViT-H/14 model up to 25-shot classification. We would like to emphasize that this albeit straightforward approach is one of the first to resolve this discontinuity problem between zero-shot and few-shot learning.

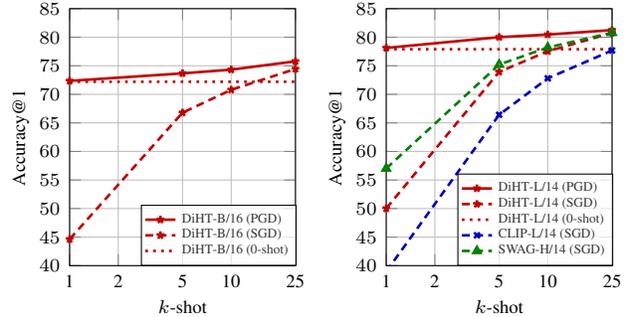
\begin{figure}[t]
\centering
\pgfplotstableread{
    samples     l_in    l_coco_t2i  l_coco_i2t  l_flckr_t2i     l_flckr_i2t     l_filt_in   l_filt_coco_t2i     l_filt_coco_i2t     l_filt_flckr_t2i    l_filt_flckr_i2t
    4           0.608   0.337       0.521       0.593           0.777           0.615       0.376               0.559               0.665               0.832
    8           0.634   0.357       0.534       0.608           0.794           0.633       0.388               0.564               0.680               0.832
    16          0.645   0.368       0.551       0.628           0.812           0.644       0.393               0.560               0.681               0.854
    32          0.651   0.373       0.553       0.632           0.804           0.648       0.400               0.574               0.684               0.843
}{\filtering}

\pgfplotstableread{
    id  clip_in     clip_coco_t2i   clip_flickr_t2i     ours_in     ours_coco_t2i   ours_flickr_t2i
    1   0.634       0.314           0.595               0.680       0.406           0.686
    2   0.684       0.337           0.633               0.722       0.433           0.729
    3   0.766       0.377           0.686               0.779       0.493           0.782
}{\clipvsours}

\pgfplotstableread{
    k       ours_b16_pgd    ours_b16_sgd    ours_l14_336_pgd    ours_l14_336_sgd    clip_l14_336_sgd    swag_h14_sgd
    0       0.7220          0.7220          0.7790              0.7790              0.7660              nan
    1       0.7235          0.4463          0.7812              0.5001              0.3896              0.5701
    5       0.7367          0.6678          0.8001              0.7393              0.6642              0.7521
    10      0.7432          0.7079          0.8046              0.7753              0.7280              0.7820
    25      0.7578          0.7444          0.8125              0.8076              0.7768              0.8078
    50      0.7777          0.7658          nan                 nan                 nan                 nan
    100     0.7897          0.7821          nan                 nan                 nan                 nan
}{\fewshot}
\hspace{-1em}
\begin{tikzpicture}
    \begin{semilogxaxis}[%
        ylabel near ticks, ylabel shift = -3pt, yticklabel pos=left,
        xlabel near ticks, xlabel shift = -2pt,
        font=\scriptsize,
        width=0.55\linewidth,
        height=0.6\linewidth,
        title style = {yshift = -5pt},
        xlabel={$k$-shot},
        ylabel={Accuracy@1},
        xlabel style  = {yshift = 0pt},
        legend image code/.code={ 
            \draw[mark repeat=2,mark phase=2]
            plot coordinates {
                (0cm,0cm)
                (0.2cm,0cm)  
                (0.4cm,0cm)  
            };%
        },
        legend pos=south east,
        legend style={cells={anchor=west}, font =\tiny, inner sep=1pt, fill opacity=0.8, row sep=-3pt, xshift=0.3em, yshift=-0.3em},
        ymin = 40,
        ymax = 85,
        xmin = 1,
        xmax = 25,
        grid=both,
        xtick={1,2,5,10,25,50,100},
        xticklabels={1,2,5,10,25,50,100},
        ytick={40,45,...,100},
        tick label style ={font=\scriptsize},
        xticklabel style={font=\scriptsize,align=left},
    ]  
        \addplot[color=darkred, solid, mark=star, mark options={solid}, mark size=1.5, line width=1] table[x=k, y expr={100*\thisrow{ours_b16_pgd}}] \fewshot; \leg{\ours-B/16 (PGD)};
        \addplot[color=darkred, densely dashed, mark=star, mark options={solid}, mark size=1.5, line width=1] table[x=k, y expr={100*\thisrow{ours_b16_sgd}}] \fewshot; \leg{\ours-B/16 (SGD)};
        \addplot[color=darkred, dotted, mark=none, mark size=1.5, mark options={solid}, line width=1] table[x=k, y expr={100*0.7220}] \fewshot; \leg{\ours-B/16 (0-shot)};
    \end{semilogxaxis}
\end{tikzpicture}
\begin{tikzpicture}
    \begin{semilogxaxis}[%
        ylabel near ticks, ylabel shift = -3pt, yticklabel pos=left,
        xlabel near ticks, xlabel shift = -2pt,
        font=\scriptsize,
        width=0.55\linewidth,
        height=0.6\linewidth,
        title style = {yshift = -5pt},
        xlabel={$k$-shot},
        ylabel={Accuracy@1},
        xlabel style  = {yshift = 0pt},
        legend image code/.code={ 
            \draw[mark repeat=2,mark phase=2]
            plot coordinates {
                (0cm,0cm)
                (0.2cm,0cm)  
                (0.4cm,0cm)  
            };%
        },
        legend pos=south east,
        legend style={cells={anchor=west}, font =\tiny, inner sep=1pt, fill opacity=0.8, row sep=-3pt, xshift=0.3em, yshift=-0.3em},
        ymin = 40,
        ymax = 85,
        xmin = 1,
        xmax = 25,
        grid=both,
        xtick={1,2,5,10,25,50,100},
        xticklabels={1,2,5,10,25,50,100},
        ytick={40,45,...,100},
        tick label style ={font=\scriptsize},
        xticklabel style={font=\scriptsize,align=left},
    ]  
        \addplot[color=darkred, solid, mark=star, mark options={solid}, mark size=1.5, line width=1] table[x=k, y expr={100*\thisrow{ours_l14_336_pgd}}] \fewshot; \leg{\ours-L/14 (PGD)};
        \addplot[color=darkred, densely dashed, mark=star, mark options={solid}, mark size=1.5, line width=1] table[x=k, y expr={100*\thisrow{ours_l14_336_sgd}}] \fewshot; \leg{\ours-L/14 (SGD)};
        \addplot[color=darkred, dotted, mark=none, mark size=1.5, mark options={solid}, line width=1] table[x=k, y expr={100*0.779}] \fewshot; \leg{\ours-L/14 (0-shot)};
        \addplot[color=darkblue, densely dashed, mark=x, mark options={solid}, mark size=1.5, line width=1] table[x=k, y expr={100*\thisrow{clip_l14_336_sgd}}] \fewshot; \leg{CLIP-L/14 (SGD)};
        \addplot[color=darkgreen, densely dashed, mark=triangle, mark options={solid}, mark size=1.5, line width=1] table[x=k, y expr={100*\thisrow{swag_h14_sgd}}] \fewshot; \leg{SWAG-H/14 (SGD)};
    \end{semilogxaxis}
\end{tikzpicture}
\vspace{-1em}
\caption{
    $k$-shot linear probing performance on ImageNet1K.
    \label{fig:few}
}
\vspace{-1em}
\end{figure}

\section{Conclusion and future work}
\label{sec:conclusion}

In this paper, we demonstrate that with careful dataset filtering and simple but effective modeling changes, it is possible to achieve substantial improvements in zero-shot performance on retrieval and classification tasks through large-scale pre-training. Our CAT filtering approach can be applied generically to any large-scale dataset for improved performance with smaller training schedules. Moreover, our concept distillation approach presents a compute and storage efficient way of leveraging very large capacity pre-trained image models for multimodal training. Finally, our simple projected gradient approach covers the crucial performance gap between zero-shot and few-shot learning.

In future, we would like to extend our approach  to multi-modal encoder/decoder~\cite{adl+22,ywv+22,cwc+22,lsg+21,ydt+22} architectures that although expensive, have better zero-shot performance compared to dual encoders. We also observe that benefits of our hard-negatives loss are less on noisier LAION dataset compared to PMD. It would be interesting to explore how to make it more effective in these very noisy settings. We hope that our improvements and extensive large-scale ablations will further advance the vision-language research.

{
\small
\bibliographystyle{ieee_fullname}
\bibliography{main}

\begin{thebibliography}{10}\itemsep=-1pt

\bibitem{adl+22}
Jean-Baptiste Alayrac, Jeff Donahue, Pauline Luc, Antoine Miech, Iain Barr,
  Yana Hasson, Karel Lenc, Arthur Mensch, Katie Millican, Malcolm Reynolds,
  Roman Ring, Eliza Rutherford, Serkan Cabi, Tengda Han, Zhitao Gong, Sina
  Samangooei, Marianne Monteiro, Jacob Menick, Sebastian Borgeaud, Andrew
  Brock, Aida Nematzadeh, Sahand Sharifzadeh, Mikolaj Binkowski, Ricardo
  Barreira, Oriol Vinyals, Andrew Zisserman, and Karen Simonyan.
\newblock Flamingo: a visual language model for few-shot learning.
\newblock {\em arXiv:2204.14198}, 2022.

\bibitem{ach+22}
Alex Andonian, Shixing Chen, and Raffay Hamid.
\newblock Robust cross-modal representation learning with progressive
  self-distillation.
\newblock In {\em CVPR}, 2022.

\bibitem{blw+14}
Thomas Berg, Jiongxin Liu, Seung Woo~Lee, Michelle~L Alexander, David~W Jacobs,
  and Peter~N Belhumeur.
\newblock Birdsnap: Large-scale fine-grained visual categorization of birds.
\newblock In {\em CVPR}, 2014.

\bibitem{bgg14}
Lukas Bossard, Matthieu Guillaumin, and Luc~Van Gool.
\newblock Food-101--mining discriminative components with random forests.
\newblock In {\em ECCV}, 2014.

\bibitem{b04}
Stephen Boyd, Stephen~P Boyd, and Lieven Vandenberghe.
\newblock {\em Convex optimization}.
\newblock Cambridge University, 2004.

\bibitem{bcn+06}
Cristian Buciluǎ, Rich Caruana, and Alexandru Niculescu-Mizil.
\newblock Model compression.
\newblock In {\em SIGKDD}, 2006.

\bibitem{ccy+17}
Guobin Chen, Wongun Choi, Xiang Yu, Tony Han, and Manmohan Chandraker.
\newblock Learning efficient object detection models with knowledge
  distillation.
\newblock In {\em NeurIPS}, 2017.

\bibitem{cxz+16}
Tianqi Chen, Bing Xu, Chiyuan Zhang, and Carlos Guestrin.
\newblock Training deep nets with sublinear memory cost.
\newblock {\em arXiv:1604.06174}, 2016.

\bibitem{xk21}
Xinlei Chen and Kaiming He.
\newblock Exploring simple siamese representation learning.
\newblock In {\em CVPR}, 2021.

\bibitem{cwc+22}
Xi Chen, Xiao Wang, Soravit Changpinyo, AJ Piergiovanni, Piotr Padlewski,
  Daniel Salz, Sebastian Goodman, Adam Grycner, Basil Mustafa, Lucas Beyer,
  Alexander Kolesnikov, Joan Puigcerver, Nan Ding, Keran Rong, Hassan Akbari,
  Gaurav Mishra, Linting Xie, Ashish Thapliyal, James Bradbury, Weicheng Kuo,
  Mojtaba Seyedhosseini, Chao Jia, Burcu~K Ayan, Carlos Riquelme, Andreas
  Steiner, Anelia Angelova, Xiaohua Zhai, Neil Houlsby, and Radu Soricut.
\newblock {PaLI: A jointly-scaled multilingual language-image model}.
\newblock {\em arXiv:2209.06794}, 2022.

\bibitem{cly+20}
Yen-Chun Chen, Linjie Li, Licheng Yu, Ahmed El~Kholy, Faisal Ahmed, Zhe Gan, Yu
  Cheng, and Jingjing Liu.
\newblock {UNITER: UNiversal Image-TExt Representation Learning}.
\newblock In {\em ECCV}, 2020.

\bibitem{crl+20}
Ching-Yao Chuang, Joshua Robinson, Yen-Chen Lin, Antonio Torralba, and Stefanie
  Jegelka.
\newblock Debiased contrastive learning.
\newblock In {\em NeurIPS}, 2020.

\bibitem{cmk+14}
Mircea Cimpoi, Subhransu Maji, Iasonas Kokkinos, Sammy Mohamed, and Andrea
  Vedaldi.
\newblock Describing textures in the wild.
\newblock In {\em CVPR}, 2014.

\bibitem{bf16+22}
Google Cloud.
\newblock {Using bfloat16 with tensorflow models.}
\newblock
  \href{https://cloud.google.com/tpu/docs/bfloat16}{cloud.google.com/tpu/docs/bfloat16},
  2022.

\bibitem{cnl11}
Adam Coates, Andrew Ng, and Honglak Lee.
\newblock An analysis of single-layer networks in unsupervised feature
  learning.
\newblock In {\em AISTATS}, 2011.

\bibitem{dws+19}
Qianggang Ding, Sifan Wu, Hao Sun, Jiadong Guo, and Shu-Tao Xia.
\newblock Adaptive regularization of labels.
\newblock {\em arXiv:1908.05474}, 2019.

\bibitem{dbk+20}
Alexey Dosovitskiy, Lucas Beyer, Alexander Kolesnikov, Dirk Weissenborn,
  Xiaohua Zhai, Thomas Unterthiner, Mostafa Dehghani, Matthias Minderer, Georg
  Heigold, Sylvain Gelly, Jakob Uszkoreit, and Neil Houlsby.
\newblock An image is worth 16x16 words: Transformers for image recognition at
  scale.
\newblock In {\em ICLR}, 2020.

\bibitem{evw+10}
Mark Everingham, Luc Van~Gool, Christopher~KI Williams, John Winn, and Andrew
  Zisserman.
\newblock The pascal visual object classes (voc) challenge.
\newblock {\em IJCV}, 2010.

\bibitem{ffp04}
Li Fei-Fei, Rob Fergus, and Pietro Perona.
\newblock Learning generative visual models from few training examples: An
  incremental bayesian approach tested on 101 object categories.
\newblock In {\em CVPR}, 2004.

\bibitem{gbc16}
Ian Goodfellow, Yoshua Bengio, and Aaron Courville.
\newblock {\em Deep learning}.
\newblock MIT press, 2016.

\bibitem{gks+17}
Yash Goyal, Tejas Khot, Douglas Summers-Stay, Dhruv Batra, and Devi Parikh.
\newblock Making the v in vqa matter: Elevating the role of image understanding
  in visual question answering.
\newblock In {\em CVPR}, 2017.

\bibitem{gh10}
Michael Gutmann and Aapo Hyv{\"a}rinen.
\newblock Noise-contrastive estimation: A new estimation principle for
  unnormalized statistical models.
\newblock In {\em AISTATS}, 2010.

\bibitem{hc+19}
Sangchul Hahn and Heeyoul Choi.
\newblock Self-knowledge distillation in natural language processing.
\newblock In {\em RANLP}, 2019.

\bibitem{hfw+20}
Kaiming He, Haoqi Fan, Yuxin Wu, Saining Xie, and Ross Girshick.
\newblock Momentum contrast for unsupervised visual representation learning.
\newblock In {\em CVPR}, 2020.

\bibitem{hvd15}
Geoffrey Hinton, Oriol Vinyals, and Jeff Dean.
\newblock Distilling the knowledge in neural network.
\newblock {\em arXiv:1503.02531}, 2015.

\bibitem{spacy+2022}
Matthew Honnibal, Ines Montani, Sofie Van~Landeghem, and Adriane Boyd.
\newblock {spaCy}.
\newblock
  \href{https://github.com/explosion/spaCy}{github.com/explosion/spaCy}, 2020.

\bibitem{iww+21}
Gabriel Ilharco, Mitchell Wortsman, Ross Wightman, Cade Gordon, Nicholas
  Carlini, Rohan Taori, Achal Dave, Vaishaal Shankar, Hongseok Namkoong, John
  Miller, Hannaneh Hajishirzi, Ali Farhadi, and Ludwig Schmidt.
\newblock {OpenCLIP}.
\newblock
  \href{https://github.com/mlfoundations/open_clip}{github.com/mlfoundations/open\_clip},
  2021.

\bibitem{jyx+21}
Chao Jia, Yinfei Yang, Ye Xia, Yi-Ting Chen, Zarana Parekh, Hieu Pham, Quoc Le,
  Yun-Hsuan Sung, Zhen Li, and Tom Duerig.
\newblock Scaling up visual and vision-language representation learning with
  noisy text supervision.
\newblock In {\em ICML}, 2021.

\bibitem{kmm+19}
Dhiraj Kalamkar, Dheevatsa Mudigere, Naveen Mellempudi, Dipankar Das, Kunal
  Banerjee, Sasikanth Avancha, Dharma~Teja Vooturi, Nataraj Jammalamadaka,
  Jianyu Huang, Hector Yuen, Jiyan Yang, Jongsoo Park, Alexander Heineckel,
  Evangelos Georganas, Sudarshan Srinivasan, Abhisek Kundu, Misha Smelyanskiy,
  Bharat Kaul, and Pradeep Dubey.
\newblock A study of bfloat16 for deep learning training.
\newblock {\em arXiv:1905.12322}, 2019.

\bibitem{ksp+20}
Yannis Kalantidis, Mert~Bulent Sariyildiz, Noe Pion, Philippe Weinzaepfel, and
  Diane Larlus.
\newblock Hard negative mixing for contrastive learning.
\newblock In {\em NeurIPS}, 2020.

\bibitem{kvy+22}
Zaid Khan, BG Vijay~Kumar, Xiang Yu, Samuel Schulter, Manmohan Chandraker, and
  Yun Fu.
\newblock Single-stream multi-level alignment for vision-language pretraining.
\newblock In {\em ECCV}, 2022.

\bibitem{kfm20}
Douwe Kiela, Hamed Firooz, Aravind Mohan, Vedanuj Goswami, Amanpreet Singh,
  Pratik Ringshia, and Davide Testuggine.
\newblock The hateful memes challenge: Detecting hate speech in multimodal
  memes.
\newblock In {\em NeurIPS}, 2020.

\bibitem{kb15}
Diederik~P Kingma and Jimmy Ba.
\newblock Adam: A method for stochastic optimization.
\newblock In {\em ICLR}, 2015.

\bibitem{kprv+17}
James Kirkpatrick, Razvan Pascanu, Neil Rabinowitz, Joel Veness, Guillaume
  Desjardins, Andrei~A Rusu, Kieran Milan, John Quan, Tiago Ramalho, Agnieszka
  Grabska-Barwinska, Demis Hassabis, Claudia Clopath, Dharshan Kumaran, and
  Raia Hadsell.
\newblock Overcoming catastrophic forgetting in neural networks.
\newblock {\em PNAS}, 2017.

\bibitem{ksdf13}
Jonathan Krause, Michael Stark, Jia Deng, and Li Fei-Fei.
\newblock 3d object representations for fine-grained categorization.
\newblock In {\em ICCVW}, 2013.

\bibitem{kzg+17}
Ranjay Krishna, Yuke Zhu, Oliver Groth, Justin Johnson, Kenji Hata, Joshua
  Kravitz, Stephanie Chen, Yannis Kalantidis, Li-Jia Li, David~A Shamma,
  Michael~S Bernstein, and Li Fei-Fei.
\newblock Visual genome: Connecting language and vision using crowdsourced
  dense image annotations.
\newblock {\em IJCV}, 2017.

\bibitem{kh09}
Alex Krizhevsky and Geoffrey Hinton.
\newblock Learning multiple layers of features from tiny images.
\newblock {\em Canada}, 2009.

\bibitem{ksl+21}
Zhanghui Kuang, Hongbin Sun, Zhizhong Li, Xiaoyu Yue, Tsui~Hin Lin, Jianyong
  Chen, Huaqiang Wei, Yiqin Zhu, Tong Gao, Wenwei Zhang, Kai Chen, Wayne Zhang,
  and Dahua Lin.
\newblock Mmocr: A comprehensive toolbox for text detection, recognition and
  understanding.
\newblock In {\em MM}, 2021.

\bibitem{kra+20}
Alina Kuznetsova, Hassan Rom, Neil Alldrin, Jasper Uijlings, Ivan Krasin, Jordi
  Pont-Tuset, Shahab Kamali, Stefan Popov, Matteo Malloci, Alexander
  Kolesnikov, Tom Duerig, and Vittorio Ferrari.
\newblock The open images dataset v4.
\newblock {\em IJCV}, 2020.

\bibitem{llxh22}
Junnan Li, Dongxu Li, Caiming Xiong, and Steven Hoi.
\newblock {BLIP: Bootstrapping language-image pre-training for unified
  vision-language understanding and generation}.
\newblock In {\em ICML}, 2022.

\bibitem{lsg+21}
Junnan Li, Ramprasaath Selvaraju, Akhilesh Gotmare, Shafiq Joty, Caiming Xiong,
  and Steven Chu~Hong Hoi.
\newblock Align before fuse: Vision and language representation learning with
  momentum distillation.
\newblock In {\em NeurIPS}, 2021.

\bibitem{lll+20}
Tianhong Li, Jianguo Li, Zhuang Liu, and Changshui Zhang.
\newblock Few sample knowledge distillation for efficient network compression.
\newblock In {\em CVPR}, 2020.

\bibitem{lyl+20}
Xiujun Li, Xi Yin, Chunyuan Li, Pengchuan Zhang, Xiaowei Hu, Lei Zhang, Lijuan
  Wang, Houdong Hu, Li Dong, Furu Wei, Yejin Choi, and Jianfeng Gao.
\newblock {Oscar: Object-semantics aligned pre-training for vision-language
  tasks}.
\newblock In {\em ECCV}, 2020.

\bibitem{lys+17}
Yuncheng Li, Jianchao Yang, Yale Song, Liangliang Cao, Jiebo Luo, and Li-Jia
  Li.
\newblock Learning from noisy labels with distillation.
\newblock In {\em ICCV}, 2017.

\bibitem{lmb+14}
Tsung-Yi Lin, Michael Maire, Serge Belongie, James Hays, Pietro Perona, Deva
  Ramanan, Piotr Doll{\'a}r, and C~Lawrence Zitnick.
\newblock Microsoft coco: Common objects in context.
\newblock In {\em ECCV}, 2014.

\bibitem{llc+21}
Ze Liu, Yutong Lin, Yue Cao, Han Hu, Yixuan Wei, Zheng Zhang, Stephen Lin, and
  Baining Guo.
\newblock Swin transformer: Hierarchical vision transformer using shifted
  windows.
\newblock In {\em CVPR}, 2021.

\bibitem{lh17}
Ilya Loshchilov and Frank Hutter.
\newblock {SGDR: Stochastic gradient descent with warm restarts}.
\newblock In {\em ICLR}, 2017.

\bibitem{lh19}
Ilya Loshchilov and Frank Hutter.
\newblock Decoupled weight decay regularization.
\newblock In {\em ICLR}, 2019.

\bibitem{mgr+18}
Dhruv~Kumar Mahajan, Ross~B. Girshick, Vignesh Ramanathan, Kaiming He, Manohar
  Paluri, Yixuan Li, Ashwin Bharambe, and Laurens van~der Maaten.
\newblock Exploring the limits of weakly supervised pretraining.
\newblock In {\em ECCV}, 2018.

\bibitem{mj00}
Kanti~V Mardia and Peter~E Jupp.
\newblock {\em Directional statistics}.
\newblock Wiley Online Library, 2000.

\bibitem{mna+18}
Paulius Micikevicius, Sharan Narang, Jonah Alben, Gregory Diamos, Erich Elsen,
  David Garcia, Boris Ginsburg, Michael Houston, Oleksii Kuchaiev, Ganesh
  Venkatesh, and Hao Wu.
\newblock Mixed precision training.
\newblock In {\em ICLR}, 2018.

\bibitem{m+98}
George~A Miller.
\newblock {\em WordNet: An electronic lexical database}.
\newblock MIT press, 1998.

\bibitem{mm20}
Ishan Misra and Laurens van~der Maaten.
\newblock Self-supervised learning of pretext-invariant representations.
\newblock In {\em CVPR}, 2020.

\bibitem{nz08}
Maria-Elena Nilsback and Andrew Zisserman.
\newblock Automated flower classification over a large number of classes.
\newblock In {\em ICVGIP}, 2008.

\bibitem{olv18}
Aaron van~den Oord, Yazhe Li, and Oriol Vinyals.
\newblock Representation learning with contrastive predictive coding.
\newblock {\em arXiv:1807.03748}, 2018.

\bibitem{okb11}
Vicente Ordonez, Girish Kulkarni, and Tamara Berg.
\newblock Im2text: Describing images using 1 million captioned photographs.
\newblock In {\em NeurIPS}, 2011.

\bibitem{pvzj12}
Omkar~M Parkhi, Andrea Vedaldi, Andrew Zisserman, and CV Jawahar.
\newblock Cats and dogs.
\newblock In {\em CVPR}, 2012.

\bibitem{pdgk21}
Hieu Pham, Zihang Dai, Golnaz Ghiasi, Kenji Kawaguchi, Hanxiao Liu, Adams~Wei
  Yu, Jiahui Yu, Yi-Ting Chen, Minh-Thang Luong, Yonghui Wu, Mingxing Tan, and
  Quoc~V Le.
\newblock Combined scaling for open-vocabulary image classification.
\newblock {\em arXiv:2111.10050}, 2021.

\bibitem{pwl+15}
Bryan~A Plummer, Liwei Wang, Chris~M Cervantes, Juan~C Caicedo, Julia
  Hockenmaier, and Svetlana Lazebnik.
\newblock Flickr30k entities: Collecting region-to-phrase correspondences for
  richer image-to-sentence models.
\newblock In {\em ICCV}, 2015.

\bibitem{puc+20}
Jordi Pont-Tuset, Jasper Uijlings, Soravit Changpinyo, Radu Soricut, and
  Vittorio Ferrari.
\newblock Connecting vision and language with localized narratives.
\newblock In {\em ECCV}, 2020.

\bibitem{rsg+21}
Filip Radenovic, Animesh Sinha, Albert Gordo, Tamara Berg, and Dhruv Mahajan.
\newblock Large-scale attribute-object compositions.
\newblock {\em arXiv:2105.11373}, 2021.

\bibitem{rkh+21}
Alec Radford, Jong~Wook Kim, Chris Hallacy, Aditya Ramesh, Gabriel Goh,
  Sandhini Agarwal, Girish Sastry, Amanda Askell, Pamela Mishkin, Jack Clark,
  Gretchen Krueger, and Ilya Sutskever.
\newblock Learning transferable visual models from natural language
  supervision.
\newblock In {\em ICML}, 2021.

\bibitem{rcsj21}
Joshua Robinson, Ching-Yao Chuang, Suvrit Sra, and Stefanie Jegelka.
\newblock Contrastive learning with hard negative samples.
\newblock {\em ICLR}, 2021.

\bibitem{rbk+14}
Adriana Romero, Nicolas Ballas, Samira~Ebrahimi Kahou, Antoine Chassang, Carlo
  Gatta, and Yoshua Bengio.
\newblock Fitnets: Hints for thin deep nets.
\newblock {\em arXiv:1412.6550}, 2014.

\bibitem{rds+15}
Olga Russakovsky, Jia Deng, Hao Su, Jonathan Krause, Sanjeev Satheesh, Sean Ma,
  Zhiheng Huang, Andrej Karpathy, Aditya Khosla, Michael Bernstein, Alexander~C
  Berg, and Li Fei-Fei.
\newblock Imagenet large scale visual recognition challenge.
\newblock {\em IJCV}, 2015.

\bibitem{sbv+22}
Christoph Schuhmann, Romain Beaumont, Richard Vencu, Cade Gordon, Ross
  Wightman, Mehdi Cherti, Theo Coombes, Aarush Katta, Clayton Mullis, Mitchell
  Wortsman, Mitchell Wortsman, Patrick Schramowski, Srivatsa Kundurthy,
  Katherine Crowson, Ludwig Scmidth, Robert Kaczmarcyk, and Jitsev Jenia.
\newblock {LAION-5B: An open large-scale dataset for training next generation
  image-text models}.
\newblock {\em arXiv:2210.08402}, 2022.

\bibitem{laion400m-sch+21}
Christoph Schuhmann, Richard Vencu, Romain Beaumont, Robert Kaczmarczyk,
  Clayton Mullis, Aarush Katta, Theo Coombes, Jenia Jitsev, and Aran
  Komatsuzaki.
\newblock {LAION-400M: Open dataset of clip-filtered 400 million image-text
  pairs}.
\newblock {\em arXiv:2111.02114}, 2021.

\bibitem{sx22}
Zhiqiang Shen and Eric Xing.
\newblock A fast knowledge distillation framework for visual recognition.
\newblock In {\em ECCV}, 2022.

\bibitem{sph+22}
Uriel Singer, Adam Polyak, Thomas Hayes, Xi Yin, Jie An, Songyang Zhang, Qiyuan
  Hu, Harry Yang, Oron Ashual, Oran Gafni, Devi Parikh, Sonal Gupta, and Yaniv
  Taigman.
\newblock Make-a-video: Text-to-video generation without text-video data.
\newblock {\em arXiv:2209.14792}, 2022.

\bibitem{shg+22}
Amanpreet Singh, Ronghang Hu, Vedanuj Goswami, Guillaume Couairon, Wojciech
  Galuba, Marcus Rohrbach, and Douwe Kiela.
\newblock {FLAVA: A foundational language and vision alignment model}.
\newblock In {\em CVPR}, 2022.

\bibitem{sga+22}
Mannat Singh, Laura Gustafson, Aaron Adcock, Vinicius de Freitas~Reis, Bugra
  Gedik, Raj~Prateek Kosaraju, Dhruv Mahajan, Ross Girshick, Piotr Doll{\'a}r,
  and Laurens van~der Maaten.
\newblock Revisiting weakly supervised pre-training of visual perception
  models.
\newblock In {\em CVPR}, 2022.

\bibitem{szs12}
Khurram Soomro, Amir~Roshan Zamir, and Mubarak Shah.
\newblock {UCF101: A dataset of 101 human actions classes from videos in the
  wild}.
\newblock {\em arXiv:1212.0402}, 2012.

\bibitem{tjb+22}
Tristan Thrush, Ryan Jiang, Max Bartolo, Amanpreet Singh, Adina Williams, Douwe
  Kiela, and Candace Ross.
\newblock Winoground: Probing vision and language models for visio-linguistic
  compositionality.
\newblock In {\em CVPR}, 2022.

\bibitem{tcd+21}
Hugo Touvron, Matthieu Cord, Matthijs Douze, Francisco Massa, Alexandre
  Sablayrolles, and Herv{\'e} J{\'e}gou.
\newblock Training data-efficient image transformers \& distillation through
  attention.
\newblock In {\em ICML}, 2021.

\bibitem{vsp+17}
Ashish Vaswani, Noam Shazeer, Niki Parmar, Jakob Uszkoreit, Llion Jones,
  Aidan~N Gomez, {\L}ukasz Kaiser, and Illia Polosukhin.
\newblock Attention is all you need.
\newblock In {\em NeurIPS}, 2017.

\bibitem{wyy+21}
Zirui Wang, Jiahui Yu, Adams~Wei Yu, Zihang Dai, Yulia Tsvetkov, and Yuan Cao.
\newblock {SimVLM: Simple visual language model pretraining with weak
  supervision}.
\newblock {\em arXiv:2108.10904}, 2021.

\bibitem{wcz+21}
Bichen Wu, Ruizhe Cheng, Peizhao Zhang, Peter Vajda, and Joseph~E Gonzalez.
\newblock Data efficient language-supervised zero-shot recognition with optimal
  transport distillation.
\newblock {\em arXiv:2112.09445}, 2021.

\bibitem{wxc+21}
Haiping Wu, Bin Xiao, Noel Codella, Mengchen Liu, Xiyang Dai, Lu Yuan, and Lei
  Zhang.
\newblock Cvt: Introducing convolutions to vision transformers.
\newblock In {\em ICCV}, 2021.

\bibitem{xeh+16}
Jianxiong Xiao, Krista~A Ehinger, James Hays, Antonio Torralba, and Aude Oliva.
\newblock Sun database: Exploring a large collection of scene categories.
\newblock {\em IJCV}, 2016.

\bibitem{xldk19}
Ning Xie, Farley Lai, Derek Doran, and Asim Kadav.
\newblock Visual entailment: A novel task for fine-grained image understanding.
\newblock {\em arXiv:1901.06706}, 2019.

\bibitem{xlh+20}
Qizhe Xie, Minh-Thang Luong, Eduard Hovy, and Quoc~V Le.
\newblock Self-training with noisy student improves imagenet classification.
\newblock In {\em CVPR}, 2020.

\bibitem{xl+19}
Ting-Bing Xu and Cheng-Lin Liu.
\newblock Data-distortion guided self-distillation for deep neural networks.
\newblock In {\em AAAI}, 2019.

\bibitem{ydt+22}
Jinyu Yang, Jiali Duan, Son Tran, Yi Xu, Sampath Chanda, Liqun Chen, Belinda
  Zeng, Trishul Chilimbi, and Junzhou Huang.
\newblock Vision-language pre-training with triple contrastive learning.
\newblock In {\em CVPR}, 2022.

\bibitem{ylz+22}
Jianwei Yang, Chunyuan Li, Pengchuan Zhang, Bin Xiao, Ce Liu, Lu Yuan, and
  Jianfeng Gao.
\newblock Unified contrastive learning in image-text-label space.
\newblock In {\em CVPR}, 2022.

\bibitem{yhh+21}
Lewei Yao, Runhui Huang, Lu Hou, Guansong Lu, Minzhe Niu, Hang Xu, Xiaodan
  Liang, Zhenguo Li, Xin Jiang, and Chunjing Xu.
\newblock {FILIP: Fine-grained interactive language-image pre-training}.
\newblock {\em arXiv:2111.07783}, 2021.

\bibitem{ywv+22}
Jiahui Yu, Zirui Wang, Vijay Vasudevan, Legg Yeung, Mojtaba Seyedhosseini, and
  Yonghui Wu.
\newblock Coca: Contrastive captioners are image-text foundation models.
\newblock {\em arXiv:2205.01917}, 2022.

\bibitem{lcc+21}
Lu Yuan, Dongdong Chen, Yi-Ling Chen, Noel Codella, Xiyang Dai, Jianfeng Gao,
  Houdong Hu, Xuedong Huang, Boxin Li, Chunyuan Li, Ce Liu, Mengchen Liu,
  Zicheng Liu, Yumao Lu, Yu Shi, Lijuan Wang, Jianfeng Wang, Bin Xiao, Xiao
  Zhen, Jianwei Yang, Michael Zeng, Luowei Zhou, and Pengchuan Zhang.
\newblock Florence: A new foundation model for computer vision.
\newblock {\em arXiv:2111.11432}, 2021.

\bibitem{ypl+20}
Sukmin Yun, Jongjin Park, Kimin Lee, and Jinwoo Shin.
\newblock Regularizing class-wise predictions via self-knowledge distillation.
\newblock In {\em CVPR}, 2020.

\bibitem{zwm+22}
Xiaohua Zhai, Xiao Wang, Basil Mustafa, Andreas Steiner, Daniel Keysers,
  Alexander Kolesnikov, and Lucas Beyer.
\newblock {LiT: Zero-shot transfer with locked-image text tuning}.
\newblock In {\em CVPR}, 2022.

\bibitem{zs21}
Wenzheng Zhang and Karl Stratos.
\newblock Understanding hard negatives in noise contrastive estimation.
\newblock In {\em NAACL}, 2021.

\end{thebibliography}
}

\clearpage
\appendix
\counterwithin{figure}{section}
\counterwithin{table}{section}
\counterwithin{equation}{section}

\section{Appendix}

\subsection{Method details}

\subsubsection{Semantic parser}
To enable a rich complexity and semantic filtering, we built a fast custom semantic parser that converts a given textual caption to a semantic graph similar to the one in Visual Genome\cite{kzg+17}. In particular, we extract objects, their parts, their attributes, and the actions that they are involved in (see Figure \ref{fig:object-data-parser} for example). 
The parser is built on top of the English language dependency parser from Spacy\cite{spacy+2022} combined with multiple rules to infer common object relations. The aim of the parser is high speed with high precision of common object relations such as `has\_attribute` and `has\_part` and basic `action` support.
Below, we describe the structured relations that we extract from natural language text.

\vspace{1em}\noindent We support the following semantic relations:

\vspace{-0.5em}\paragraph{Object (\_obj).}
We extract objects that are supposedly presented in an image. 
We consider nouns that are not attributes of another noun (not part of a noun phrase).
\Eg in \textit{birthday cake} and \textit{baby stroller}, the nouns \textit{cake} and \textit{stroller} are parsed as objects, and the nouns \textit{birthday} and \textit{baby} are considered attributes. We do not consider proper nouns.

\vspace{-0.5em}\paragraph{Attribute (has\_attr).}
Denotes attributes that characterize an object or another attribute. For example, \textit{dark green}, would result in a fact \textit{green - has\_attr - dark}, and \textit{yellow candles} results in \textit{candles - has\_attr - yellow}.

\vspace{-0.5em}\paragraph{Part (has\_part).}
Characterizes a visual part of an object. \Eg \textit{cake with 21 yellow candles} would result in a part fact \textit{cake - has\_part - candles}.

\vspace{-0.5em}\paragraph{Action (\_act).}
Verbs that do not entail attributes or parts (\eg forms of \textit{be}, \textit{looks}, \textit{seems}, and \textit{have} are excluded) are considered actions. For actions, we also parse the subject and object arguments.

\vspace{-0.5em}\paragraph{Subject of an action (act\_has\_subj, is\_act\_subj).}
We use the act\_has\_subj and is\_act\_subj relation to represent arguments (nouns) that are the subject of an action.
\Eg for the text \textit{a person is eating an apple}, we add the object-centric and corresponding action-centric symmetric facts: \textit{person - is\_subj\_act - eating} and \textit{eating act\_has\_subj person}. 

\vspace{-0.5em}\paragraph{Object of an action (act\_has\_obj, is\_act\_obj).}
We also include the relations that specify the object arguments of an action.
\Eg for the text \textit{a person is eating an apple}, we add the object-centric and corresponding action-centric symmetric facts: \textit{apple - is\_obj\_act - eating} and \textit{eating act\_has\_obj apple}.

\vspace{1em}\noindent We recognize the following limitations of ours approach:

\paragraph{Semantic attributes.}  In this work, we focus on object-centric visual and action characteristics and we do not process spatial relations ( X next to Y) or additional action arguments (read a book *in* the library). Spatial relations and additional arguments of verbs usually involve more complex semantic reasoning and require more robust approaches and task-specific models such as one trained on Semantic Role Labeling which are usually compute-heavy. We leave these for future work. 

\paragraph{Dependency parser errors.} In the current version of the parser, we also parse potential attributes as actions, which are not likely to be always visual.
\Eg In the phrase “running person”, running is an action and an attribute, and we parse them as such. However, sometimes the underlying parser would also parse attributes in phrases such as “striped mug” as verbs, where we process the attribute “striped” as both an attribute and an action (without arguments).

\subsubsection{Concept distillation}
The teacher model is built by training linear classifiers - which predict objects and attributes - on top of a frozen SWAG~\cite{sga+22} backbone.
SWAG is trained in a weakly-supervised manner by predicting hashtags from Instagram images.
We use the publicly available weights, and adopt a training procedure that is similar to the one from SWAG for learning the linear classifiers.
The procedure for training the object classifier is as follows.
First, we parse the captions to extract nouns. Next, we canonicalize the nouns via WordNet~\cite{m+98} synsets and remove ones which occur less than 250 times in the dataset.
The resulting vocabulary contains $\sim$10K unique synsets.
Finally, we optimize the linear layer's weights through a cross-entropy loss.
Each entry in the target distribution of the cross-entropy is either $1 / K$ or $0$ depending on whether the corresponding synset is present or not, where $K$ is the number of synsets for that image.
We apply inverse square-root resampling of images to upsample the tail classes following~\cite{sga+22}.
The target length of the dataset is set to 50 million samples during resampling .
We train the linear layer using SGD with momentum 0.9 and weight decay 1e-4.
The learning rate is set following the linear scaling rule: lr$=$0.001$\cdot$$\frac{bs}{256}$.
To speedup training, we use 64 GPUs with batch size of 256 per GPU.
The attribute classifiers are build in a similar way, but the WordNet adjective synsets require additional filtering to remove non-visual attributes, \eg, \emph{claustrophobic}, \emph{experienced}.
Following~\cite{rsg+21}, we select the attributes based on their \emph{sharedness} and \emph{visualness}.
We rank the attributes based on the aforementioned scores, and keep $\sim$1200 attributes.

\subsection{Training details}
For our model architecture, we closely follow CLIP by Radford~\etal~\cite{rkh+21}.
We utilize Vision Transformers (ViT)~\cite{dbk+20} for images and Text Transformers~\cite{vsp+17} for captions.
We experiment with 3 different architectures, denoted as B/32, B/16, and L/14, where 32, 16, and 14 denote the input image patch size.
Other architecture scaling parameters are in Table~\ref{tab:arch_hypers}.
For distillation and fine-tuning experiments, we utilize the public SWAG-ViT models~\cite{sga+22}, pre-trained with weak supervision from hashtags.

\begin{table}[t]
\centering
\caption{\ours architecture hyperparameters.}
\label{tab:arch_hypers}
\vspace{-1em}
\def\arraystretch{0.8}  
\setlength{\tabcolsep}{1mm}  
\def\cw{0.8cm}
\def\hs{1mm}
\footnotesize{
    \begin{tabular}{L{1cm}C{0.7cm}C{\cw}C{\cw}C{\cw}C{\cw}C{\cw}C{\cw}}
    \toprule
    \multirow{2}{*}{Model} & \multirow{2}{*}{Dim} & \multicolumn{3}{c}{Vision} & \multicolumn{3}{c}{Language} \\
    \cmidrule(l{\hs}r{\hs}){3-5}\cmidrule(l{\hs}r{\hs}){6-8}
    & & layers & width & heads & layers & width & heads \\
    \midrule
    B/32 &  512 & 12 &  768 & 12 & 12 &  512 &  8 \\
    B/16 &  512 & 12 &  768 & 12 & 12 &  512 &  8 \\
    L/14 &  768 & 24 & 1024 & 16 & 12 &  768 & 12 \\
    \bottomrule
    \end{tabular}
}
\end{table}

We use the Adam~\cite{kb15} optimizer with a decoupled weight decay~\cite{lh19} and a cosine learning rate schedule~\cite{lh17}.
Input image size is 224$\times$224 pixels, for pre-training runs.
All hyperparameters are presented in Table~\ref{tab:common_hypers}.
They are selected by training on a small scale setup, and reused for other experiments.
For \objects and attributes classifiers in concept distillation (CD), we found that scaling the learning rate by 10.0 and weight decay by 0.01 gave better results.

\begin{table}[t]
\centering
\caption{\ours common hyperparameters.}
\label{tab:common_hypers}
\vspace{-1em}
\def\arraystretch{0.9}  
\setlength{\tabcolsep}{1mm}  
\def\hs{1mm}
\footnotesize{
    \begin{tabular}{L{3cm}C{1.5cm}C{1.5cm}C{1.5cm}}
    \toprule
    \textbf{Shared} \\
    \midrule
    Learning rate (LR) & \multicolumn{3}{c}{1e-3} \\
    Warm-up & \multicolumn{3}{c}{1\%} \\
    Vocabulary size & \multicolumn{3}{c}{49408} \\
    Temperature (init, max) & \multicolumn{3}{c}{($\frac{1}{0.07}$, 100.0)} \\
    Adam ($\beta_1$, $\beta_2$) & \multicolumn{3}{c}{(0.9, 0.98)} \\
    Adam $\epsilon$ & \multicolumn{3}{c}{1e-6} \\
    High resolution LR & \multicolumn{3}{c}{1e-4} \\
    \midrule
    \textbf{Dataset specific} & \multicolumn{2}{c}{LAION} & PMD \\
    \midrule
    CD learning rate scale & \multicolumn{2}{c}{10.0} & 1.0 \\
    CD weight decay scale & \multicolumn{2}{c}{0.01} & 1.0 \\
    HN-NCE $\alpha$ & \multicolumn{2}{c}{1.0} & 0.999 \\
    HN-NCE $\beta$ & \multicolumn{2}{c}{0.25} & 0.5 \\
    \midrule
     & \multicolumn{2}{c}{LAION} & PMD \\
     \cmidrule(l{\hs}r{\hs}){2-3}\cmidrule(l{\hs}r{\hs}){4-4}
    \textbf{Model specific} & L/14 & B/16,B/32 & B/16,B/32 \\
    \midrule
    Batch size & 98304 & 49152 & 32768 \\
    Weight decay & 0.2 & 0.1 & 0.1 \\
    \bottomrule
    \end{tabular}
    \vspace{-1em}
}
\end{table}

We pre-train the models on 4B, 8B, 16B, or 32B processed samples, depending on the experiment.
For L/14 we train at a higher 336px resolution for additional 400M samples, denoting this models as L/14@336.
We trained L/14 for 6 days on 512 A100 GPUs with 16B processed samples for a total of $7.4 \times 10^4$ GPU hours.

To accelerate training and save memory, we use mixed-precision training~\cite{mna+18}.
For L/14 we use grad checkpointing~\cite{cxz+16} and BFLOAT16~\cite{bf16+22,kmm+19} format, all the other models are trained using FP16~\cite{mna+18} format.
Contrastive loss is computed on the local subset of the pairwise similarities~\cite{rkh+21}.

\begin{figure*}[t]
\centering
\pgfplotstableread{
    samples     l_in    l_coco_t2i  l_coco_i2t  l_flckr_t2i     l_flckr_i2t     l_filt_in   l_filt_coco_t2i     l_filt_coco_i2t     l_filt_flckr_t2i    l_filt_flckr_i2t
    4           0.608   0.337       0.521       0.593           0.777           0.615       0.376               0.559               0.665               0.832
    8           0.634   0.357       0.534       0.608           0.794           0.633       0.388               0.564               0.680               0.832
    16          0.645   0.368       0.551       0.628           0.812           0.644       0.393               0.560               0.681               0.854
    32          0.651   0.373       0.553       0.632           0.804           0.648       0.400               0.574               0.684               0.843
}{\filtering}

\pgfplotstableread{
    id  clip_in     clip_coco_t2i   clip_flickr_t2i     ours_in     ours_coco_t2i   ours_flickr_t2i
    1   0.634       0.314           0.595               0.680       0.406           0.686
    2   0.684       0.337           0.633               0.722       0.433           0.729
    3   0.766       0.377           0.686               0.779       0.493           0.782
}{\clipvsours}

\pgfplotstableread{
    k       ours_b16_pgd    ours_b16_sgd    ours_l14_336_pgd    ours_l14_336_sgd    clip_l14_336_sgd    swag_h14_sgd
    0       0.7220          0.7220          0.7790              0.7790              0.7660              nan
    1       0.7235          0.4463          0.7812              0.5001              0.3896              0.5701
    5       0.7367          0.6678          0.8001              0.7393              0.6642              0.7521
    10      0.7432          0.7079          0.8046              0.7753              0.7280              0.7820
    25      0.7578          0.7444          0.8125              0.8076              0.7768              0.8078
    50      0.7777          0.7658          nan                 nan                 nan                 nan
    100     0.7897          0.7821          nan                 nan                 nan                 nan
}{\fewshot}
\begin{tabular}{ccccc}
%
\begin{tikzpicture}
    \begin{semilogxaxis}[%
        ylabel near ticks, ylabel shift = -2pt, yticklabel pos=left,
        xlabel near ticks,
        font=\scriptsize,
        width=0.22\linewidth,
        height=0.25\linewidth,
        title={ImageNet-1K},
        title style = {yshift = -5pt},
        xlabel={Num Samples},
        ylabel={Accuracy@1},
        xlabel style  = {yshift = 0pt},
        legend pos=south east,
        legend style={cells={anchor=west}, font =\tiny, fill opacity=0.8, row sep=-2.5pt, xshift=0.15em, yshift=-0.2em},
        ymin = 58,
        ymax = 68,
        xmin = 3.6,
        xmax = 36,
        grid=both,
        xtick={4,8,16,32},
        xticklabels={4B, 8B, 16B, 32B},
        ytick={1, 2, ..., 100},
        tick label style ={font=\scriptsize},
    ]  
        \addplot[color=darkblue, solid, mark=x, mark size=1.5, line width=1] table[x=samples, y expr={100*\thisrow{l_in}}] \filtering; \leg{LAION-2B};
        \addplot[color=darkred, solid, mark=star, mark size=1.5, line width=1] table[x=samples, y expr={100*\thisrow{l_filt_in}}] \filtering; \leg{LAION-CAT};
    \end{semilogxaxis}
\end{tikzpicture}
\hspace{-1.25em}
&
\begin{tikzpicture}
    \begin{semilogxaxis}[%
        ylabel near ticks, ylabel shift = -2pt, yticklabel pos=left,
        xlabel near ticks,
        font=\scriptsize,
        width=0.22\linewidth,
        height=0.25\linewidth,
        title={COCO (T2I)},
        title style = {yshift = -5pt},
        xlabel={Num Samples},
        ylabel={Recall@1},
        xlabel style  = {yshift = 0pt},
        legend pos=south east,
        legend style={cells={anchor=west}, font =\tiny, fill opacity=0.8, row sep=-2.5pt},
        ymin = 32,
        ymax = 42,
        xmin = 3.6,
        xmax = 36,
        grid=both,
        xtick={4,8,16,32},
        xticklabels={4B, 8B, 16B, 32B},
        ytick={1, 2, ..., 100},
        tick label style ={font=\scriptsize},
    ]  
        \addplot[color=darkblue, solid, mark=x, mark size=1.5, line width=1] table[x=samples, y expr={100*\thisrow{l_coco_t2i}}] \filtering; 
        \addplot[color=darkred, solid, mark=x, mark size=1.5, line width=1] table[x=samples, y expr={100*\thisrow{l_filt_coco_t2i}}] \filtering; 
    \end{semilogxaxis}
\end{tikzpicture}
\hspace{-1.25em}
&
\begin{tikzpicture}
    \begin{semilogxaxis}[%
        ylabel near ticks, ylabel shift = -2pt, yticklabel pos=left,
        xlabel near ticks,
        font=\scriptsize,
        width=0.22\linewidth,
        height=0.25\linewidth,
        title={COCO (I2T)},
        title style = {yshift = -5pt},
        xlabel={Num Samples},
        ylabel={Recall@1},
        xlabel style  = {yshift = 0pt},
        legend pos=south east,
        legend style={cells={anchor=west}, font =\tiny, fill opacity=0.8, row sep=-2.5pt},
        ymin = 50,
        ymax = 60,
        xmin = 3.6,
        xmax = 36,
        grid=both,
        xtick={4,8,16,32},
        xticklabels={4B, 8B, 16B, 32B},
        ytick={1, 2, ..., 100},
        tick label style ={font=\scriptsize},
    ]  
        \addplot[color=darkblue, solid, mark=x, mark size=1.5, line width=1] table[x=samples, y expr={100*\thisrow{l_coco_i2t}}] \filtering; 
        \addplot[color=darkred, solid, mark=x, mark size=1.5, line width=1] table[x=samples, y expr={100*\thisrow{l_filt_coco_i2t}}] \filtering; 
    \end{semilogxaxis}
\end{tikzpicture}
\hspace{-1.25em}
&
\begin{tikzpicture}
    \begin{semilogxaxis}[%
        ylabel near ticks, ylabel shift = -2pt, yticklabel pos=left,
        xlabel near ticks,
        font=\scriptsize,
        width=0.22\linewidth,
        height=0.25\linewidth,
        title={Flickr (T2I)},
        title style = {yshift = -5pt},
        xlabel={Num Samples},
        ylabel={Recall@1},
        xlabel style  = {yshift = 0pt},
        legend pos=south east,
        legend style={cells={anchor=west}, font =\tiny, fill opacity=0.8, row sep=-2.5pt},
        ymin = 59,
        ymax = 69,
        xmin = 3.6,
        xmax = 36,
        grid=both,
        xtick={4,8,16,32},
        xticklabels={4B, 8B, 16B, 32B},
        ytick={1, 2, ..., 100},
        tick label style ={font=\scriptsize},
    ]  
        \addplot[color=darkblue, solid, mark=x, mark size=1.5, line width=1] table[x=samples, y expr={100*\thisrow{l_flckr_t2i}}] \filtering; 
        \addplot[color=darkred, solid, mark=x, mark size=1.5, line width=1] table[x=samples, y expr={100*\thisrow{l_filt_flckr_t2i}}] \filtering; 
    \end{semilogxaxis}
\end{tikzpicture}
\hspace{-1.25em}
&
\begin{tikzpicture}
    \begin{semilogxaxis}[%
        ylabel near ticks, ylabel shift = -2pt, yticklabel pos=left,
        xlabel near ticks,
        font=\scriptsize,
        width=0.22\linewidth,
        height=0.25\linewidth,
        title={Flickr (I2T)},
        title style = {yshift = -5pt},
        xlabel={Num Samples},
        ylabel={Recall@1},
        xlabel style  = {yshift = 0pt},
        legend pos=south east,
        legend style={cells={anchor=west}, font =\tiny, fill opacity=0.8, row sep=-2.5pt},
        ymin = 76,
        ymax = 86,
        xmin = 3.6,
        xmax = 36,
        grid=both,
        xtick={4,8,16,32},
        xticklabels={4B, 8B, 16B, 32B},
        ytick={1, 2, ..., 100},
        tick label style ={font=\scriptsize},
    ]  
        \addplot[color=darkblue, solid, mark=x, mark size=1.5, line width=1] table[x=samples, y expr={100*\thisrow{l_flckr_i2t}}] \filtering; 
        \addplot[color=darkred, solid, mark=x, mark size=1.5, line width=1]table[x=samples, y expr={100*\thisrow{l_filt_flckr_i2t}}] \filtering; 
    \end{semilogxaxis}
\end{tikzpicture}
\end{tabular}
\vspace{-1em}
\caption{
    Evaluating effect of using our LAION-CAT subset filtered on complexity (C), actions (A), and text spotting (T).
    Evaluation performed on ViT-B/32 architecture trained for a varying number of processed samples.
\label{fig:fitering}
}
\end{figure*}
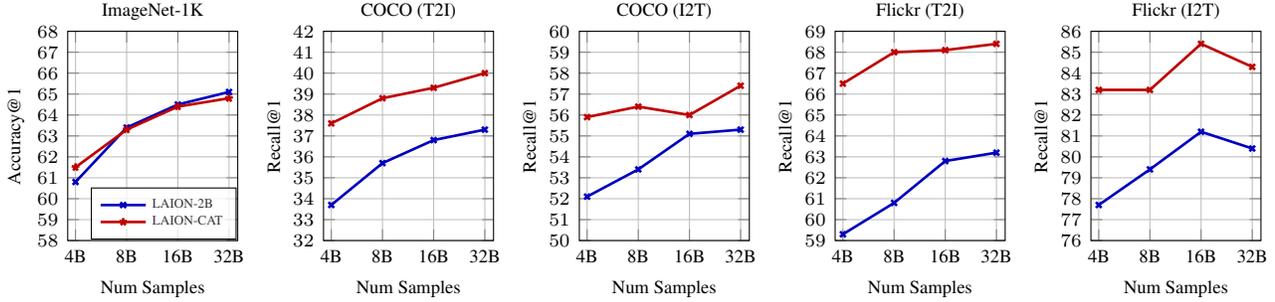

\subsection{Evaluation details}

We evaluate our models on a zero-shot benchmark of 24 datasets:
(i) \textbf{17 image classification}: Birdsnap~\cite{blw+14}, CIFAR10~\cite{kh09}, CIFAR100~\cite{kh09}, Caltech101~\cite{ffp04}, Country211~\cite{rkh+21}, DTD~\cite{cmk+14}, Flowers102~\cite{nz08}, Food101~\cite{bgg14}, ImageNet1K~\cite{rds+15}, OxfordPets~\cite{pvzj12}, STL10~\cite{cnl11}, SUN397~\cite{xeh+16}, StanfordCars~\cite{ksdf13}, UCF101~\cite{szs12}, HatefulMemes~\cite{kfm20}, PascalVOC2007~\cite{evw+10}, OpenImages~\cite{kra+20}; 
(ii) \textbf{5 cross-modal retrieval} (text-to-image T2I, image-to-text I2T): COCO~\cite{lmb+14}, Flickr~\cite{pwl+15}, LN-COCO~\cite{puc+20}, LN-Flickr~\cite{puc+20}, Winoground~\cite{tjb+22}; 
(iii) \textbf{2 visual question answering}: SNLI-VE~\cite{xldk19}, VQAv2~\cite{gks+17}.
Note that, cross-modal retrieval datasets have 2 tasks (T2I and I2T), so in total we evaluate across 29 tasks.

We follow zero-shot CLIP benchmark\footnote{\href{https://github.com/LAION-AI/CLIP_benchmark}{github.com/LAION-AI/CLIP\_benchmark}} implementation for most of the datasets, and implement the ones that are missing.
For most image classification tasks we compute Accuracy@1, except HatefulMemes where we compute AUROC because it is binary classification, OpenImages where we compute FlatHit@1 following~\cite{wcz+21}, and PascalVOC2007 where we compute mean average precision (mAP) because it is multi-label classification.
We use the same prompt ensembling method as CLIP~\cite{rkh+21} to improve zero-shot image classification.
For cross-modal retrieval (T2I and I2T), we compute Recall@1.
For COCO and Flickr we apply a simple prompt pretext ``a photo of \{\texttt{caption}\}'', for LN-COCO, LN-Flickr, and Winoground no prompt is applied.
We cast visual question answering (VQA) as binary prediction task and compute AP on the cosine similarity between an image and a text (a hypothesis or a question).
For SNLI-VE, we take a subset which has agreement among annotators, we use ``entailement'' and ``contradiction'' as binary classes, and drop the ``neutral" class.
For VQAv2, we take the subset with yes/no questions.
No prompt is applied for SNLI-VE and VQAv2.

\subsection{Additional ablations}

\paragraph{Effect of dataset filtering.}
In Figure~\ref{fig:fitering} we observe that gains from our proposed complexity, action, and text-spotting (CAT) dataset filtering hold as we train for longer training schedules.
We ran small scale experiments with several complexity filters (see Table \ref{tab:filtering_numbers}) and we found that CAT with minimum complexity C1 performed the best.

\begin{table}[t]
\centering
\small
\caption{
    Number of examples after filtering with different filters.
}
\label{tab:filtering_numbers}
\vspace{-1em}
\def\arraystretch{1.0}  
\setlength{\tabcolsep}{0.5mm}
\def\cw{0.6cm}
\def\hs{1mm}
\small{
    \begin{tabular}{C{\cw}C{\cw}C{\cw}C{\cw}C{\cw}C{\cw}R{2cm}R{1.2cm}}
        \toprule
        \multicolumn{6}{c}{Filter} &  \multirow{2}{*}{\# examples} &  \multirow{2}{*}{\% of full} \\
        \cmidrule(l{\hs}r{\hs}){1-6}
        \cite{sph+22} & C0 & C1 & C2 & A & T & & \\
        \midrule
        & & & & & & 2,121,505,329 & 100.00 \\
        \checkmark & & & & & & 1,983,345,180 & 93.49 \\
        \checkmark & \checkmark & & & & & 1,891,725,045 & 89.17 \\
        \checkmark & & \checkmark & & & & 1,709,522,548 & 80.58 \\
        \checkmark & & & \checkmark & & & 1,143,660,096 & 53.91 \\
        \checkmark & & & & \checkmark & & 691,535,901 & 32.60 \\
        \checkmark & & \checkmark & & \checkmark & & 642,162,957 & 30.27 \\
        \checkmark & & & \checkmark & \checkmark & & 487,493,190 & 22.98 \\
        \checkmark & & \checkmark & & \checkmark & \checkmark & 438,358,791 & 20.66 \\
        \bottomrule
    \end{tabular}
}
\end{table}

\paragraph{Effect of top-k predicted objects and attributes.}
In Table~\ref{tab:topk}, we show that our concept distillation approach is quite robust to the choice of the number of predicted \objects and attributes. 
For $k=10$ strong accuracy is achieved with a small increase in dataset memory.

\begin{table}[t]
\centering
\caption{
    Evaluating effect of using different number of top-$k$ predicted \objects and attributes.
    Evaluation on ViT-B/16 model architecture trained for 8B processed samples on LAION-CAT.
    Memory denotes storage needed to store predicted concepts.
}
\vspace{-1em}
\label{tab:topk}
\def\arraystretch{1.0}  
\setlength{\tabcolsep}{1mm}  
\def\cww{0.7cm}
\def\hs{1mm}
\centering
\small{
    \begin{tabular}{C{1.2cm}C{1.5cm}C{\cww}C{\cww}C{\cww}C{\cww}C{\cww}}
        \toprule
        \multirow{2}{*}{top-k} & \multirow{2}{*}{Memory}  & \multirow{2}{*}{IN} & \multicolumn{2}{c}{COCO} & \multicolumn{2}{c}{Flickr} \\
        \cmidrule(l{\hs}r{\hs}){4-5}\cmidrule(l{\hs}r{\hs}){6-7}
        & & & T2I & I2T & T2I & I2T \\
        \midrule
        5  & 16.3GB & 71.4 & 42.9 & 59.4 & 72.2 & 86.5 \\
        10 & 32.6GB & 71.9 & 42.9 & 60.3 & 73.3 & 87.0 \\
        25 & 81.6GB & 71.4 & 43.1 & 60.0 & 72.9 & 87.9 \\
        \bottomrule
    \end{tabular}
}
\end{table}

\paragraph{Effect of $\alpha$ and $\beta$ on {\sc HN-Nce.}}
From intuition, one can see that the term $\alpha$ controls the mass of the positive alignment term in the loss function, and the term $\beta$ controls the difficulty of the negatives. The need for the term $\alpha$ can be attributed as follows. If there are false negatives within the dataset, dampening the positive alignment term can prevent the model from becoming overly discriminative with the true and false positive pairs. Hence, we would like to reduce $\alpha$ as the likelihood of having false positives increases (\eg, smaller datasets, less noisy training). The need for $\beta$ is straightforward: higher $\beta$ pushes the weighing function to be ``sharper'', with more mass on the hardest negatives.
Table~\ref{tab:ablations_hn} shows the effect of different values of $\alpha$ and $\beta$ on LAION-CAT.

\begin{table}[t]
\centering
\caption{
    Evaluating effect of different hyperparameters $\alpha$ and $\beta$ for the {\sc HN-NCE} loss. 
    Evaluation on ViT-B/16 model architecture trained for 16B processed samples on LAION-CAT.
}
\vspace{-1em}
\label{tab:ablations_hn}
\def\arraystretch{1.0}  
\setlength{\tabcolsep}{1mm}  
\def\cww{0.7cm}
\def\hs{1mm}
\centering
\small{
    \begin{tabular}{C{1.2cm}C{1.5cm}C{\cww}C{\cww}C{\cww}C{\cww}C{\cww}}
        \toprule
        \multirow{2}{*}{$\alpha$} & \multirow{2}{*}{$\beta$}  & \multirow{2}{*}{IN} & \multicolumn{2}{c}{COCO} & \multicolumn{2}{c}{Flickr} \\
        \cmidrule(l{\hs}r{\hs}){4-5}\cmidrule(l{\hs}r{\hs}){6-7}
        & & & T2I & I2T & T2I & I2T \\
        \midrule
        1  & 0 & 68.7 & 42.8 & 60.5 & 72.8 & 87.6 \\
        1  & 0.25 & 69.2 & 42.9 & 61.2 & 72.6 & 87.8 \\
        1  & 0.5 & 66.5 & 40.3 & 59.7 & 71.4 & 84.9 \\
        0.999 & 0.25 & 69.0 & 42.6 & 60.9 & 72.3 & 87.9  \\
        0.9 & 0.25 & 68.6 & 42.1 & 59.2 & 71.2 & 85.5  \\
        \bottomrule
    \end{tabular}
    \vspace{0.5em}
}
\end{table}

\paragraph{Additional results on few-shot probing.}
We examine the performance of our models on linear probing with the full training set for ImageNet1K~\cite{rds+15}. We compare the performance of DiHT-L/14 and CLIP-L/14~\cite{rkh+21} architectures for both the 224px and 336px input sizes in Table~\ref{tab:linear_probe}. We observe that the PGD approach with the DiHT model outperforms prior work, and also find that there is no notable difference in performance between SGD-trained and PGD-trained models, as there is no need for regularization when training with the full dataset. We reproduce the reported numbers for CLIP~\cite{rkh+21} and train our models with a learning rate of 24, no weight decay, and batch size of 96,000 for 160 epochs.

\begin{table}[t]
\centering
\caption{
    Evaluating linear probing with the complete training set for ImageNet1K on the ViT-L/14 architecture.
}
\vspace{-1em}
\label{tab:linear_probe}
\def\arraystretch{1.0}  
\setlength{\tabcolsep}{1mm}  
\def\cww{2.5cm}
\def\hs{1mm}
\centering
\small{
    \begin{tabular}{C{3cm}C{1.5cm}C{\cww}}
        \toprule
        Model & Optimizer & ImageNet-1K Accuracy (\%) \\
        \midrule
        CLIP-L/14 @ 224px & SGD & 83.60 \\
        DiHT-L/14 @ 224px & SGD & 85.40 \\
        DiHT-L/14 @ 224px & PGD & {\bf85.41} \\
        \midrule
        CLIP-L/14 @ 336px & SGD & 85.40 \\
        DiHT-L/14 @ 336px & SGD & 85.87 \\
        DiHT-L/14 @ 336px & PGD & {\bf85.89} \\
        \bottomrule
    \end{tabular}
}
\end{table}

\begin{table*}[t]
\centering
\caption{
    Zero-shot state-of-the-art dual-encoder models comparison.
    We evaluate CLIP~\cite{rkh+21} and OpenCLIP~\cite{iww+21} using our codebase.
}
\vspace{-1em}
\label{tab:sota29}
\def\arraystretch{1.0}  
\setlength{\tabcolsep}{0.1mm}  
\def\cw{0.5cm}
\def\hs{1mm}
\newcommand\cs[1]{#1}
\newcommand\css[1]{\scriptsize{#1}}
\newcommand\csss[1]{\tiny{#1}}
\centering
\scriptsize{
    \begin{tabular}{L{2cm}C{\cw}C{\cw}C{\cw}C{\cw}C{\cw}C{\cw}C{\cw}C{\cw}C{\cw}C{\cw}C{\cw}C{\cw}C{\cw}C{\cw}C{\cw}C{\cw}C{\cw}C{\cw}C{\cw}C{\cw}C{\cw}C{\cw}C{\cw}C{\cw}C{\cw}C{\cw}C{\cw}C{\cw}C{\cw}}
        \toprule
        Method & \rotatebox[origin=l]{90}{Birdsnap} & \rotatebox[origin=l]{90}{CIFAR10} & \rotatebox[origin=l]{90}{CIFAR100} & \rotatebox[origin=l]{90}{Caltech101} & \rotatebox[origin=l]{90}{Country211} & \rotatebox[origin=l]{90}{DTD} & \rotatebox[origin=l]{90}{Flowers102} & \rotatebox[origin=l]{90}{Food101} & \rotatebox[origin=l]{90}{ImageNet1K} & \rotatebox[origin=l]{90}{OxfordPets} & \rotatebox[origin=l]{90}{STL10} & \rotatebox[origin=l]{90}{SUN397} & \rotatebox[origin=l]{90}{StanfordCars} & \rotatebox[origin=l]{90}{UCF101} & \rotatebox[origin=l]{90}{HatefulMemes} & \rotatebox[origin=l]{90}{PascalVOC} & \rotatebox[origin=l]{90}{OpenImages} & \rotatebox[origin=l]{90}{COCO T2I} & \rotatebox[origin=l]{90}{COCO I2T} & \rotatebox[origin=l]{90}{Flickr T2I} & \rotatebox[origin=l]{90}{Flickr I2T} & \rotatebox[origin=l]{90}{LN-COCO T2I} & \rotatebox[origin=l]{90}{LN-COCO I2T} & \rotatebox[origin=l]{90}{LN-Flickr T2I} & \rotatebox[origin=l]{90}{LN-Flickr I2T} & \rotatebox[origin=l]{90}{Winoground T2I} & \rotatebox[origin=l]{90}{Winoground I2T} & \rotatebox[origin=l]{90}{SNLI-VE} & \rotatebox[origin=l]{90}{VQAv2} \\
        \midrule
        \multicolumn{8}{l}{~~ViT-B/32 @ 224} \\
        \cmidrule(l{\hs}r{\hs}){1-1}
        CLIP     & 40.3 & 89.8 & 65.1 & 83.9 & 17.2 & 43.8 & 66.6 & 83.9 & 63.4 & 87.4 & 97.2 & 62.3 & 59.7 & 64.2 & 58.1 & 84.2 & 27.8 & 31.4 & 49.0 & 59.5 & 79.9 & 16.8 & 24.6 & 30.2 & 38.1 & 28.1 & 27.4 & 77.6 & 57.3\\
        OpenCLIP & 50.5 & 93.6 & 75.8 & 86.4 & 16.7 & 56.1 & 71.7 & 82.7 & 66.6 & 90.6 & 96.6 & 68.5 & 86.0 & 66.1 & 53.4 & 85.4 & 34.6 & 39.0 & 56.7 & 65.7 & 81.7 & 29.5 & 35.1 & 44.0 & 51.4 & 32.0 & 30.2 & 78.6 & 59.3\\
        DiHT     & 46.5 & 92.0 & 73.6 & 80.4 & 16.3 & 55.3 & 69.8 & 84.1 & 68.0 & 91.7 & 97.2 & 66.5 & 79.6 & 68.3 & 53.5 & 78.9 & 32.4 & 40.6 & 59.3 & 68.6 & 84.4 & 29.8 & 35.7 & 46.1 & 54.0 & 30.9 & 33.0 & 79.1 & 59.9\\
        \midrule
        \multicolumn{8}{l}{~~ViT-B/16 @ 224} \\
        \cmidrule(l{\hs}r{\hs}){1-1}
        CLIP     & 43.2 & 90.8 & 68.3 & 84.7 & 22.8 & 44.9 & 71.2 & 88.7 & 68.4 & 89.1 & 98.3 & 64.4 & 64.7 & 69.5 & 59.3 & 85.3 & 29.3 & 33.7 & 51.3 & 63.3 & 81.9 & 18.7 & 25.2 & 31.3 & 37.4 & 31.0 & 30.2 & 77.9 & 57.7\\
        OpenCLIP & 52.1 & 91.7 & 71.4 & 86.2 & 18.1 & 50.8 & 69.3 & 86.1 & 67.1 & 89.4 & 97.0 & 69.6 & 83.8 & 67.7 & 55.7 & 84.2 & 35.2 & 37.8 & 55.4 & 65.2 & 84.1 & 26.1 & 33.1 & 43.5 & 46.9 & 30.5 & 30.2 & 78.4 & 59.3\\
        DiHT     & 54.5 & 92.7 & 77.5 & 81.2 & 19.1 & 59.4 & 70.5 & 89.1 & 72.2 & 92.7 & 98.2 & 68.4 & 86.0 & 70.3 & 56.2 & 79.5 & 34.6 & 43.3 & 60.3 & 72.9 & 89.8 & 32.4 & 38.2 & 52.9 & 57.7 & 32.0 & 33.4 & 80.8 & 60.3\\
        \midrule
        \multicolumn{8}{l}{~~ViT-L/14 @ 224} \\
        \cmidrule(l{\hs}r{\hs}){1-1}
        CLIP     & 52.5 & 95.6 & 78.2 & 86.7 & 31.9 & 55.5 & 79.1 & 93.1 & 75.6 & 93.5 & 99.4 & 67.6 & 77.8 & 77.0 & 60.4 & 85.5 & 30.6 & 36.5 & 54.9 & 66.1 & 84.5 & 20.8 & 28.6 & 36.2 & 44.2 & 31.9 & 32.0 & 78.2 & 58.4\\
        OpenCLIP & 62.9 & 96.6 & 83.4 & 88.0 & 26.3 & 62.9 & 75.5 & 91.0 & 75.2 & 93.2 & 98.9 & 74.3 & 92.6 & 75.2 & 55.1 & 87.5 & 38.0 & 46.2 & 64.3 & 75.4 & 90.4 & 34.6 & 39.9 & 50.9 & 57.7 & 33.4 & 36.4 & 80.8 & 60.0\\
        DiHT     & 60.4 & 91.7 & 81.3 & 81.6 & 26.0 & 60.3 & 77.6 & 92.7 & 77.0 & 93.8 & 98.0 & 70.2 & 91.1 & 77.9 & 56.5 & 79.3 & 35.0 & 48.0 & 65.1 & 76.7 & 92.0 & 35.6 & 40.7 & 52.7 & 60.3 & 31.8 & 33.4 & 81.3 & 61.0\\
        \midrule
        \multicolumn{8}{l}{~~ViT-L/14 @ 336} \\
        \cmidrule(l{\hs}r{\hs}){1-1}
        CLIP & 53.7 & 95.0 & 77.0 & 87.2 & 34.4 & 56.0 & 78.6 & 93.8 & 76.6 & 93.8 & 99.5 & 68.7 & 79.2 & 77.6 & 61.6 & 86.2 & 31.8 & 37.7 & 57.1 & 68.6 & 86.6 & 20.2 & 28.6 & 38.1 & 45.7 & 32.3 & 21.4 & 78.7 & 58.5\\
        DiHT & 62.0 & 92.2 & 81.2 & 82.4 & 27.8 & 61.1 & 77.0 & 92.9 & 77.9 & 94.0 & 98.2 & 71.2 & 91.5 & 77.7 & 56.3 & 81.0 & 36.5 & 49.3 & 65.3 & 78.2 & 91.1 & 36.7 & 41.2 & 54.5 & 61.6 & 35.0 & 38.5 & 81.7 & 61.4\\
        \bottomrule
    \end{tabular}
}
\end{table*}

\paragraph{Additional results on zero-shot benchmark.}
We report performance of CLIP~\cite{rkh+21}, OpenCLIP~\cite{iww+21}, and \ours on all 29 zero-shot tasks in Table~\ref{tab:sota29}.

\subsection{Contrastive Alignment with Hard Negatives}
\paragraph{Convergence guarantees}
\begin{proposition}
Let $\cL^\star(\bphi_i, \bphi_t) = \sup_{q\in\Pi} \cL_(\bphi_i, \bphi_t, q)$. Then for any measurable $\bphi_i, \bphi_t : \cX \rightarrow \bbS^{d-1}$ and $\tau = \cO(1)$ we observe the convergence $\cL_(\bphi_i, \bphi_t, q) \rightarrow \cL^\star(\bphi_i, \bphi_t)$ as $\beta \rightarrow \infty$.
\end{proposition}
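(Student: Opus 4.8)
The plan is to recognize the $\beta$-reweighted objective as the population loss $\cL(\bphi_i,\bphi_t,q)$ evaluated at a particular family of negative distributions $q_\beta$, and then to show that $q_\beta$ converges to the worst-case negative law as $\beta\to\infty$, in the only sense that $\cL$ sees. Writing $s(\bx,\bt)=\bx^\top\bt/\tau$, the batch weight $w^{i\to t}_{\bx_i,\bt_j}$ is, up to the normalization by $n-1$, the empirical Radon--Nikodym derivative of the exponentially tilted (von Mises--Fisher) law $q_\beta(\bt^-\mid\bx)\propto e^{\beta s(\bx,\bt^-)}\,p(\bt^-\mid\bx)$ with respect to the data law $p$; hence in the population limit the reweighted loss equals $\cL(\bphi_i,\bphi_t,q_\beta)$, and each $q_\beta$ lies in $\Pi$ once we take $\Pi$ to be the distributions absolutely continuous with respect to $p$ (no density cap).

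First I would reduce everything to a scalar. Put $m_\bx(q)=\bbE_{\bt^-\sim q(\cdot\mid\bx)}\!\left[e^{s(\bx,\bt^-)}\right]$; then every summand of $\cL$ has the shape $-\log\frac{e^{s(\bx,\bt^+)}}{\alpha\,e^{s(\bx,\bt^+)}+(n-1)\,m_\bx(q)}$, which is continuous and strictly increasing in $m_\bx(q)\ge 0$ (its derivative in $m_\bx$ equals $(n-1)/(\alpha e^{s(\bx,\bt^+)}+(n-1)m_\bx(q))>0$). The two halves of $\cL$ depend on $q$ only through $m_\bx(q)$ and its text-side transpose, and for every $q\in\Pi$ one has $m_\bx(q)\le M(\bx):=\operatorname{ess\,sup}_{\bt^-}e^{s(\bx,\bt^-)}$, with this bound approachable. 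Hence $\cL^\star(\bphi_i,\bphi_t)$ is obtained by substituting $m_\bx\mapsto M(\bx)$ in each summand.

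The analytic core is to show $m_\bx(q_\beta)\uparrow M(\bx)$ as $\beta\to\infty$, for $p$-a.e.\ $\bx$. Let $\Lambda_\bx(\beta)=\log\bbE_{\bt^-\sim p(\cdot\mid\bx)}\!\left[e^{\beta s(\bx,\bt^-)}\right]$ be the cumulant generating function of $s(\bx,\cdot)$; a direct computation gives $m_\bx(q_\beta)=e^{\Lambda_\bx(\beta+1)-\Lambda_\bx(\beta)}$. Convexity of $\Lambda_\bx$ makes $\beta\mapsto\Lambda_\bx(\beta+1)-\Lambda_\bx(\beta)$ nondecreasing (so $m_\bx(q_\beta)$ is nondecreasing in $\beta$), while $\Lambda_\bx'(\beta)=\bbE_{q_\beta}[s(\bx,\cdot)]\to\operatorname{ess\,sup}s(\bx,\cdot)=\log M(\bx)$ because exponential tilting concentrates on the essential maximizers (equivalently, the $L^\beta(p)$ norm of $e^{s(\bx,\cdot)}$ tends to its $L^\infty(p)$ norm); since $\Lambda_\bx(\beta+1)-\Lambda_\bx(\beta)=\int_\beta^{\beta+1}\Lambda_\bx'(u)\,du$ with $\Lambda_\bx'$ monotone and convergent, it too tends to $\log M(\bx)$. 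Finally, because $\bphi_i,\bphi_t$ take values in $\bbS^{d-1}$ and $\tau=\cO(1)$, all of $s$, $m_\bx(q_\beta)$, and the per-sample losses are confined to a fixed compact set; dominated convergence over $(\bx,\bt^+)$, together with continuity of the summand in $m_\bx$, then upgrades the a.e.\ pointwise limit to $\cL(\bphi_i,\bphi_t,q_\beta)\to\cL^\star(\bphi_i,\bphi_t)$, the monotonicity of $m_\bx(q_\beta)$ in $\beta$ in fact making this a monotone convergence.

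I expect the main obstacle to be precisely this concentration step and its interaction with the outer expectation: one must verify the Laplace-type limit $m_\bx(q_\beta)\to M(\bx)$ for $p$-almost every $\bx$ and check that no measurability or integrability pathology creeps in as $\bx$ varies. The secondary point requiring care is the choice of $\Pi$ -- it must contain all the tilted laws $q_\beta$ and must satisfy $\sup_{q\in\Pi}m_\bx(q)=M(\bx)$, i.e.\ carry no density ceiling of the kind used in some debiasing variants of NCE. Everything past these two points is bounded-convergence bookkeeping.
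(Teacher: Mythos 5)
Your proof is correct, and it is essentially a self-contained reconstruction of the argument the paper delegates to a citation: the paper's ``proof'' consists entirely of identifying $\cL_{\text{HN-NCE}}$ with the population objective $\cL(\bphi_i,\bphi_t,q_\beta)$ at the tilted law $q_\beta$ and invoking Proposition~6 of Robinson et al.\ for the limit $\beta\to\infty$. You make the same identification in your opening step, and then supply the convergence argument that the citation hides. Your route through the cumulant generating function is the clean way to do it: the identity $m_\bx(q_\beta)=\exp(\Lambda_\bx(\beta+1)-\Lambda_\bx(\beta))$, monotonicity in $\beta$ from convexity of $\Lambda_\bx$, and $\Lambda_\bx'(\beta)=\bbE_{q_\beta}[s(\bx,\cdot)]\to\operatorname{ess\,sup}s(\bx,\cdot)$ together give $m_\bx(q_\beta)\uparrow M(\bx)$; boundedness of $s$ on $\bbS^{d-1}\times\bbS^{d-1}$ with $\tau=\cO(1)$ then lets dominated (indeed monotone) convergence pass the pointwise limit through the outer expectation. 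This buys you a proof that does not require the reader to go dig up the precise statement and hypotheses of the external proposition.

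Two remarks, both of which you already flag implicitly. First, the proposition as stated in the paper never defines $\Pi$, and the result is only true under exactly the condition you spell out: $\Pi$ must contain every $q_\beta$ and must not impose a density cap relative to $p$, so that $\sup_{q\in\Pi}m_\bx(q)=M(\bx)$ is attainable in the limit. If $\Pi$ were, say, the set of laws with $dq/dp\le C$ (as in some debiased-NCE variants), the sup would sit strictly below $M(\bx)$ and $q_\beta$ would eventually leave $\Pi$, so the claim would fail; your proof makes this dependence explicit, which is a genuine improvement over the paper's presentation. Second, the paper's displayed population loss has a transcription slip (both logarithmic terms are written identically, with $\bphi_t(y)$ in each denominator), whereas one of the two should carry $\bphi_i(y)$ for the text-to-image direction; your treatment via ``$m_\bx$ and its text-side transpose'' handles the intended symmetric pair correctly, so this does not affect your argument.
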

\begin{proof}
Follows from Proposition 6 of~\cite{rcsj21} with the loss function $\cL(\bphi_i, \bphi_t, q_\beta)$ defined as follows for any $\beta$.
\begin{align*}
&\cL(\bphi_i, \bphi_t, q_\beta) = \\
    &\log\left[\frac{e^{\bphi_i(x)^\top\bphi_t(x)/\tau}}{ e^{\bphi_i(x)^\top\bphi_t(x)/\tau}+Q\cdot\bbE_{y\sim q_\beta}\left[ e^{\bphi_i(x)^\top\bphi_t(y)/\tau}\right]}\right] \\
    &+ \log\left[\frac{e^{\bphi_i(x)^\top\bphi_t(x)/\tau}}{ e^{\bphi_i(x)^\top\bphi_t(x)/\tau}+Q\cdot\bbE_{y\sim q_\beta}\left[ e^{\bphi_i(x)^\top\bphi_t(y)/\tau}\right]}\right].
\end{align*}
\end{proof}

\end{document}